\newcommand\smaller[2][0.85]{{\scalefont{#1}#2}}
\theoremstyle{plain}
\newtheorem{theorem}{Theorem}[section]
\newtheorem{proposition}[theorem]{Proposition}
\newtheorem{lemma}[theorem]{Lemma}
\theoremstyle{definition}
\newtheorem{definition}[theorem]{Definition}
\theoremstyle{remark}
\newcommand{\R}[0]{\mathbb{R}}
\newcommand{\N}[0]{\mathcal{N}}
\newcommand{\relu}[1]{\mathrm{ReLU}\left(#1\right)}
\renewcommand{\vec}[1]{\mathbf{#1}}
\newcommand{\mat}[1]{\mathbf{#1}}
\newcommand{\w}[0]{\vec{w}}
\newcommand{\x}[0]{\vec{x}}
\newcommand{\y}[0]{\vec{y}}
\newcommand{\J}[0]{\mat{J}}
\newcommand{\Hess}[0]{\mat{H}}
\newcommand{\dif}[0]{\mathrm{d}}
\newcommand{\T}[0]{^{\top}}
\newcommand{\M}[0]{\mathrm{M}}
\newcommand{\m}[0]{\mathfrak{m}}
\newcommand{\identity}{\mat{I}}
\newcommand{\ggn}{\textsc{ggn}}
\newcommand{\ntk}{\textsc{ntk}}
\newcommand{\Ker}[1]{\mathrm{ker}(#1)}
\newcommand{\im}[1]{\mathrm{im}(#1)}
\newcommand{\wh}{\hat{\w}}
\definecolor{boxcolor}{HTML}{27778d}
\newtcolorbox{mybox}[2][]{
    fonttitle=\itshape,
    title={#2},
    enhanced,
    colframe=boxcolor,
    coltitle=boxcolor,
    colback=white,
    boxed title style={opacityback=0,colframe=white,size=fbox,arc=0mm},
    attach boxed title to top left={yshift=-\tcboxedtitleheight/2,xshift=4mm}
}
\renewcommand{\underline}[1]{%
  \uline{\phantom{#1}}%
  \llap{\contour{white}{#1}}%
}
\renewcommand{\paragraph}[1]{\textbf{#1}~~}
\newcommand{\first}[1]{\textbf{#1}}
\title{Reparameterization invariance in approximate Bayesian inference}
\author{%
  Hrittik Roy$^\dagger$, Marco Miani$^\dagger$ \\
  Technical University of Denmark \\
  \texttt{\{hroy, mmia\}@dtu.dk} \\
  \And
  Carl Henrik Ek \\
  University of Cambridge,\\
  Karolinska Institutet \\
  \texttt{che29@cam.ac.uk}
  \And
  Philipp Hennig, Marvin Pförtner, Lukas Tatzel \\
  University of Tübingen,
  Tübingen AI Center \\
  \texttt{\{philipp.hennig, lukas.tatzel,} \\\texttt{marvin.pfoertner\}@uni-tuebingen.de}
  \And
  Søren Hauberg \\
  Technical University of Denmark \\
  \texttt{sohau@dtu.dk}
}
\begin{document}
\renewcommand{\thefootnote}{$\dagger$} 
\footnotetext{Equal contribution authors listed in random order.}
\vspace{-3mm}
\maketitle

\vspace{-5mm}
\begin{abstract}
    Current approximate posteriors in Bayesian neural networks (\textsc{bnn}s) exhibit a crucial limitation: they fail to maintain invariance under reparameterization, i.e.\@ \textsc{bnn}s assign different posterior densities to different parametrizations of identical functions. This creates a fundamental flaw in the application of Bayesian principles as it breaks the correspondence between uncertainty over the parameters with uncertainty over the parametrized function.
    In this paper, we investigate this issue in the context of the increasingly popular linearized Laplace approximation. Specifically, it has been observed that linearized predictives alleviate the common underfitting problems of the Laplace approximation. We develop a new geometric view of reparametrizations from which we explain the success of linearization. Moreover, we demonstrate that these reparameterization invariance properties can be extended to the original neural network predictive using a Riemannian diffusion process giving a straightforward algorithm for approximate posterior sampling, which empirically improves posterior fit. 
\end{abstract}

\section{Introduction} 
Bayesian deep learning has not seen the same degree of success as deep learning in general. Theoretically, Bayesian posteriors \emph{should} be superior to point estimates \citep{devroye1996probabilistic}, but the practical benefits of having the posterior are all too often not significant enough to justify their additional computational burden. This has raised the question if we even \emph{should} attempt to estimate full posteriors of all network parameters \citep{sharma2023bayesian}.

As an example, consider the Laplace approximation \citep{mackay1992laplace}, which places a Gaussian in weight space through a second-order Taylor expansion of the log-posterior. When applied to neural networks, this is known to significantly \emph{underfit} and assign significant probability mass to functions that fail to fit the training data \citep{lawrence2001variational, immer2021scalable}. Fig.~\ref{fig:laplace_fails} (top-left) exemplifies this failure mode for a small regression problem. Interestingly, this behavior is rarely observed outside neural network models, and the failure appears linked to Bayesian deep learning. 

Recently, the \emph{linearized Laplace approximation (\textsc{lla})} has been shown to significantly improve on Laplace's approximation through an additional linearization of the neural network \citep{immer2021improving, khan2019approximate}. We are unaware of any theoretical justification for this rather counterintuitive result: \emph{why would an additional degree of approximation \underline{improve} the posterior fit?}

We will show that the failures of Bayesian deep learning can partly be explained by insufficient handling of \emph{reparameterizations} of network weights, while the \textsc{lla} achieves infinitesimal invariance to reparameterizations. To motivate, consider the simple network (Fig.~\ref{fig:relu})
\begin{align}
  f(x) &= w_1 \relu{w_2\, x}; \qquad f: \R \rightarrow \R.
  \label{eq:relu}
\end{align}
This can be \emph{reparametrized} to form the same function realization from different weights as $f(x) = \sfrac{w_1}{\alpha}\, \relu{\alpha\, w_2\, x}$ for any $\alpha > 0$. That is, the weight-pairs $(w_1, w_2)$ and $(\sfrac{w_1}{\alpha}, \alpha w_2)$ correspond to the same function even if the weights are different (Fig.~\ref{fig:relu}, center).

\begin{SCfigure}[0.7][t]
    \centering
    \includegraphics[width=0.6\linewidth]{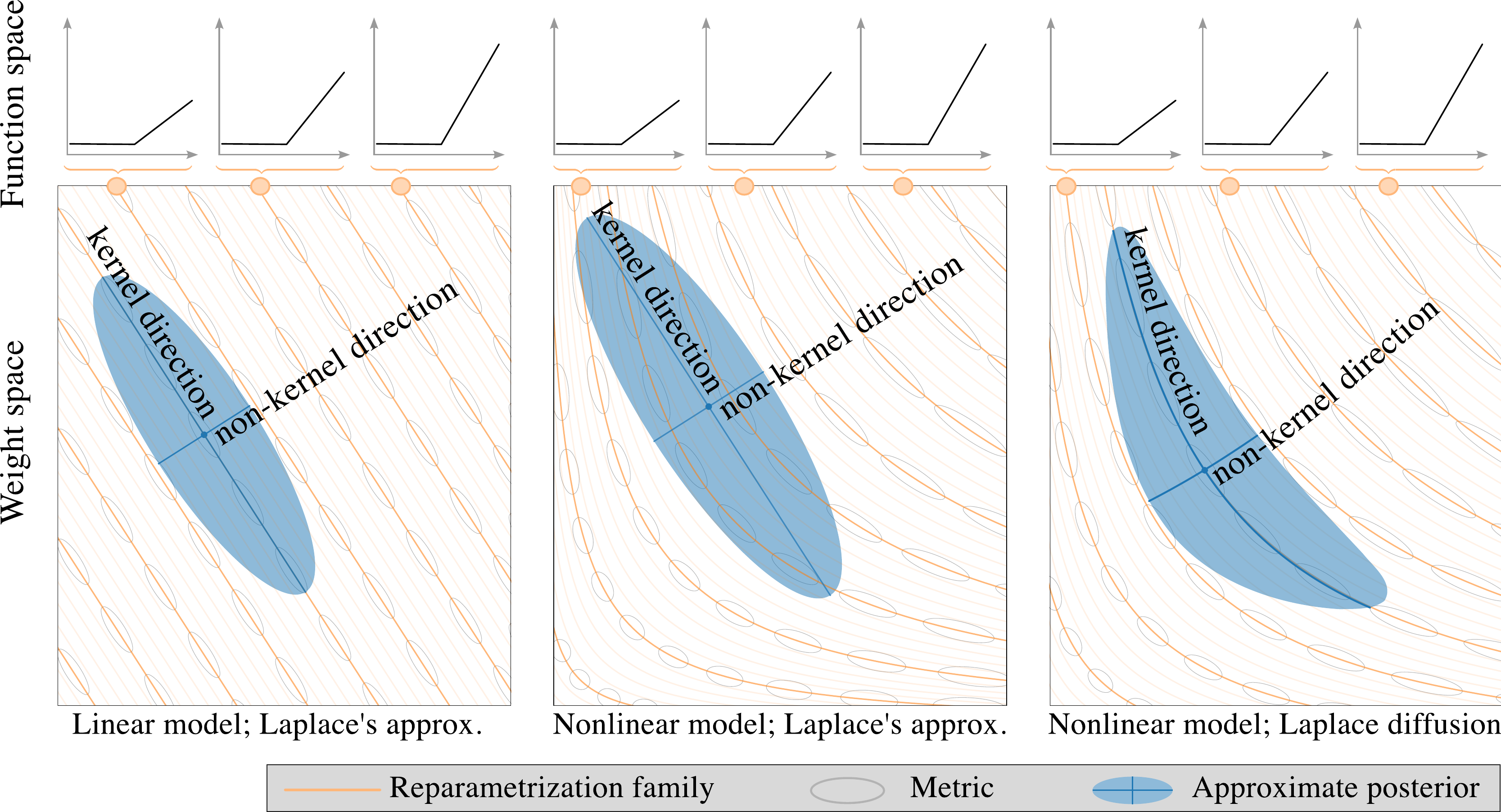}
    \caption{The \emph{weight space} of a neural network (Eq.~\ref{eq:relu}) overparametrizes the associated \emph{function space}. This induces families (orange) of weights corresponding to the same functions. Model linearization (left) linearizes these families. In nonlinear models, Gaussian weight distributions (center) do not adapt to the families, while our geometric diffusion (right) captures the associated invariance with a metric (gray ellipses).}
    \label{fig:relu}
\end{SCfigure}

Thus the approximate posterior cannot reflect the fundamental property of the true posterior, that it should assign a single unique density to a function regardless of its parametrization.

\begin{wrapfigure}[14]{r}{0.5\textwidth}
  \centering
  \vspace{-5mm}
  \includegraphics[width=\linewidth]{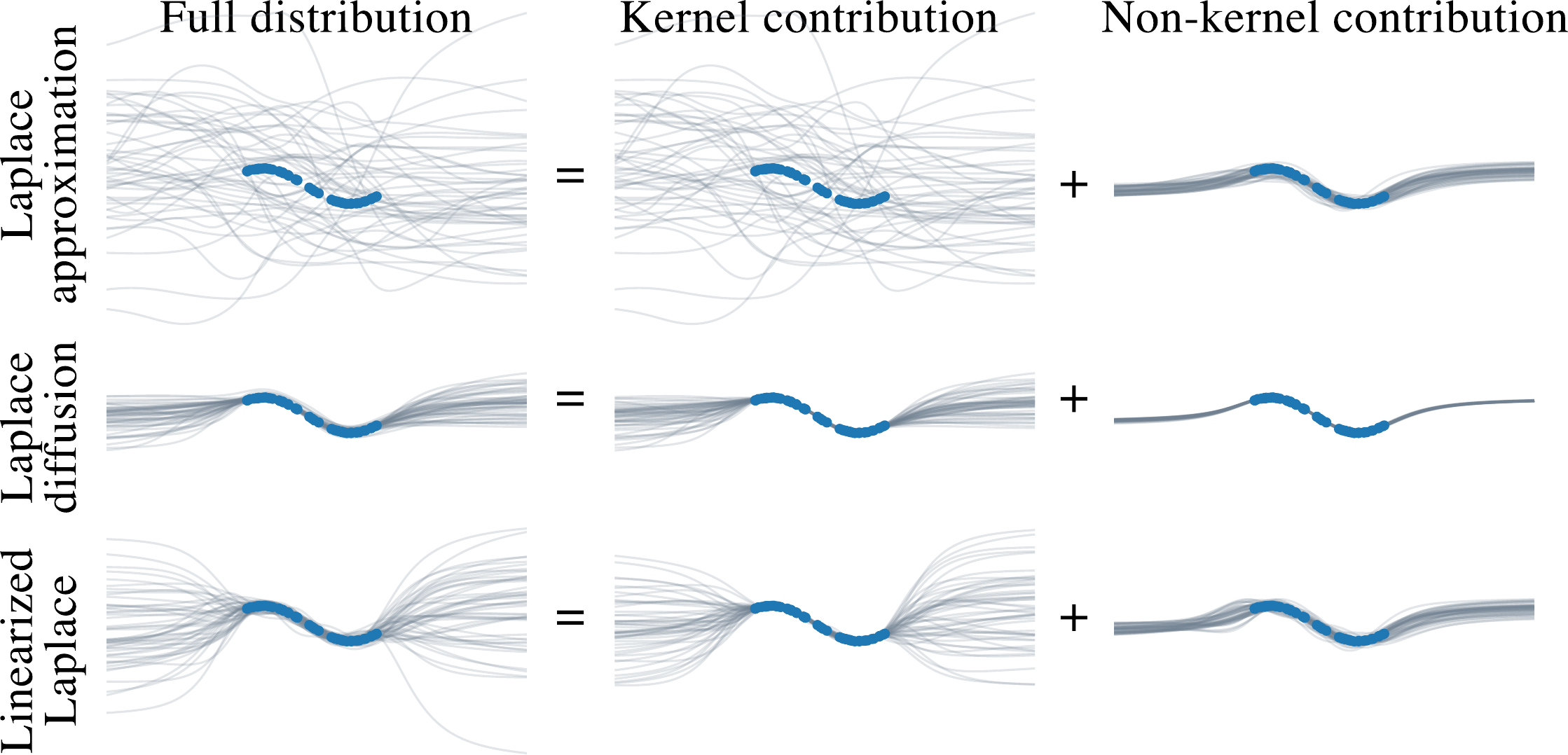}
  \vspace{-5mm}
  \caption{The \emph{function space} is decomposed into directions of \emph{reparameterizations} (kernel) and \emph{functional change} (non-kernel). We improve the posterior fit by concentrating probability mass on directions of functional change.}
  \label{fig:laplace_fails}
\end{wrapfigure}
%
\textbf{In this paper}, we analyze the reparameterization group driving deep learning and show that it is a pseudo-Riemannian manifold, with the \emph{generalized Gauss-Newton (\textsc{ggn})} matrix as its pseudo-metric. 
We prove that the commonly observed underfitting of the Laplace approximation (Fig.~\ref{fig:laplace_fails}, top left) is caused by high in-distribution uncertainty in directions of reparametrizations (Fig.~\ref{fig:laplace_fails}, top center). \looseness=-1

We develop a reparametrization invariant diffusion posterior that proveably does not underfit despite using the neural network predictive (Fig.~\ref{fig:laplace_fails}, center row). Figure~\ref{fig:relu} (right) visualizes how this posterior adapts to the geometry of reparametrizations, thereby not underfitting. The diffusion can be simulated with a multi-step Euler-Maruyama scheme from which the linearized Laplace approximation (\textsc{lla}) is a single-step. 
This link implies that the \textsc{lla} \emph{infinitesimally} is invariant to reparameterizations, due to the otherwise counterintuitive linearization (Fig.~\ref{fig:relu}, left).
Experimentally, our diffusion consistently improves posterior fit, suggesting that reparameterizations should be given more attention in Bayesian deep learning. \looseness=-1

\section{Background: Laplace approximations}
\label{2}
Let $f_{\w}: \mathbb{R}^I \rightarrow \mathbb{R}^O$ denote a neural network with weights $\w$, and define a likelihood $p(\y | f_{\w}(\x))$ and a prior $p(\w)$. \emph{Laplace's approximation} \citep{mackay1992laplace} performs a second-order Taylor expansion of the log-posterior around a mode $\wh$. This results in a Gaussian approximate posterior $\mathcal{N}(\w | \wh, -\mat{H}_{\wh}^{-1})$, where $\mat{H}_{\w}$ is the Hessian matrix.
\emph{The linearized Laplace approximation} \citep{immer2021improving, khan2019approximate} further linearize $f_{\w}$ at a chosen weight $\wh$, i.e.\@ $f_{\w}(\x) \approx f^{\wh}_{\mathrm{lin}}(\w, \x)) = f_{\wh}(\x) +\J_{\wh}(\x)(\w - \wh)$, where $\J_{\wh}(\x) = \partial_{\w} f_{\w}(\x)|_{\w = \wh} \in \R^{O \times D}$ is the Jacobian of $f_{\w}$. Here $D = \mathrm{dim}(\w)$ denotes the number of parameters in the network. Applying the usual Laplace approximation to the linearized model yields an approximate posterior \citep{immer2021improving},\looseness=-1
\begin{align}
  q(\w | \mathcal{D}) = \N\left(\w ~|~ \wh, (\ggn_{\wh} + \alpha\mat{I})^{-1}\right) \qquad 
  \ggn_{\wh} = \sum_{n=1}^N \J_{\wh}(\x_n)\T \Hess(\x_n) \J_{\wh}(\x_n), 
\end{align}
where $\Hess(\x) = -\partial^2_{f_{\wh}(\x)} \log p(\y | f_{\wh}(\x)) \in \R^{O \times O}$ is the Hessian of the log-likelihood and we have assumed a weight prior $\mathcal{N}(\vec{0}, \alpha^{-1} \mat{I})$. Note that it is trivial to extend to other prior covariances.
This particular covariance is known as the \emph{generalized Gauss-Newton} (\ggn) Hessian approximation, which is commonly used in Laplace approximations \citep{laplace2021}. 

To reduce the notational load we stack the per-datum Jacobians into $\J_{\wh} = [\J_{\wh}(\x_1); \ldots; \J_{\wh}(\x_N)] \in \R^{NO \times D}$ and similarly for the Hessians, and write the \ggn{} matrix as
$\ggn_{\wh} = \J_{\wh}\T \Hess \J_{\wh}$.
For Gaussian likelihoods, the Hessian is an identity matrix and can be disregarded, and for other likelihoods simple expressions are generally available \citep{immer2021improving}. 
\looseness=-1



\paragraph{Sampled and linearized Laplace.}
The Laplace approximation gives a Gaussian distribution $q(\w | \mathcal{D})$ over the weight space with mean $\wh$ and covariance $\Sigma$. A predictive distribution is obtained by integrating the approximate posterior against the model likelihood, 
\begin{align}
    p(\y^*|\x^*\!, \mathcal{D}) = \mathbb{E}_{\w\sim q}[p(\y^*|f(\w, \x^*))] 
                            \approx \frac{1}{S}\! \sum_{i=1}^S p(\y^*|f(\w_i, \x^*)), \quad \w_i \sim q.
\end{align}
We refer to this predictive method as \emph{sampled Laplace}. Recent works have suggested \emph{linearizing} the neural network in the likelihood model to obtain the predictive distribution \citep{immer2021improving},
\begin{align}
    p(\y^*|\x^*\!, \mathcal{D}) = \mathbb{E}_{\w\sim q}[p(\y^*|f^{\wh}_{\mathrm{lin}}(\w, \x^*))] 
                            \approx \frac{1}{S}\! \sum_{i=1}^S p(\y^*|f^{\wh}_{\mathrm{lin}}(\w_i, \x^*)), \quad \w_i \sim q.
\end{align}
This is referred to as \emph{linearised Laplace}. \citet{immer2021improving} argues that the common choice of approximating the posterior precision with the \ggn{} implicitly linearizes the neural network and hence the predictive distribution should be modified for consistency.

Sampled Laplace is known to severely \emph{underfit}, whereas the linearized Laplace approximation does not (\citealt{immer2021improving}; Fig.~\ref{fig:laplace_fails}). It is an open problem \emph{why} the crude linearization is beneficial \citep{papamarkou2024position}. This paper shows that the benefit is linked to the lack of \emph{reparameterization invariance}.\looseness=-1

\textbf{The lack of reparameterization invariance} leads to an additional problem for Laplace approximations. The precision of the approximate posterior is given either by the Hessian or the \ggn. As shown by \citet{dinh2017sharp}, the Hessian of the loss is not invariant to reparameterizations of the neural network, and the same holds for the $\ggn$. Depending on which parametrization of the posterior mode is chosen by the optimizer, we, thus, get different covariances for the approximate posterior. Empirically, this can render Laplace's approximation unstable \citep{warburg:metric:2023}. Figure~\ref{fig:relu} (center) illustrates the phenomena. 

\section{Reparameterizations of linear functions}
\label{reparam}
Deep learning models excel when they are highly \emph{overparametrized}, i.e.\@ when they have significantly more parameters than observations ($D \gg NO$). This introduces many degrees of freedom to the model, which will be reflected in the Bayesian posterior. 
However, as we have argued, traditional approximate Bayesian inference does not correctly capture this and assigns different probability measures to identical functions. Next, we characterize these degrees of freedom to design suitable approximate posteriors. 
To develop the theory, we first consider the linear setting and then extend it to the general case.


\textbf{The reparameterizations of linear functions} can be characterized exactly. Consider $f(\w) = \mat{A}\w + \mat{b}$ and a possible reparameterization, $g:\mathbb{R}^D\rightarrow\mathbb{R}^D$, of this function such that $f(g(\w)) = f(\w)$. It is then evident that $\mat{A}(g(\w) - \w) = \vec{0}$. This implies that for any reparameterization of a linear function, we have $g(\w) - \w \in \Ker{\mat{A}}$, where $\Ker{\mat{A}}$ denotes the \emph{kernel} (nullspace) of $\mat{A}$. Hence, the linear function cannot be reparametrized if we restrict ourselves to the non-kernel subspace of the input space or if $\mat{A}$ has a trivial kernel.\looseness=-1

\textbf{A linearized neural network} $f^{\w'}_{\mathrm{lin}} : \w,\x \mapsto f_{\w'}(\x) +\J_{\w'}(\x)(\w - \w')$ is a linear function in the parameters, where we have linearized around $\w'$. The above analysis then implies that the kernel of the stacked Jacobian $\J_{\w'}$ characterizes the reparameterizations of the linearized network.\looseness=-1

We can also characterize the reparameterizations through the \ggn{} and the corresponding \emph{neural tangent kernel} (\textsc{ntk}; \citealt{jacot2018neural}),
\begin{equation}
    \ggn_\w = \J_\w^\top \J_\w,
    \qquad
    \ntk_\w = \J_\w \J_\w^\top.
\end{equation}
By construction, these have the same non-zero eigenvalues, and thereby also have identical ranks. We, thus, see that the kernel of the Jacobian coincides with that of the \ggn{}, i.e.\@ $\Ker{\J_\w} = \Ker{\ggn_\w}$.

\begin{wrapfigure}[15]{r}{0.5\textwidth}
  \centering
  \vspace{-6mm}
  \includegraphics[width=0.9\linewidth]{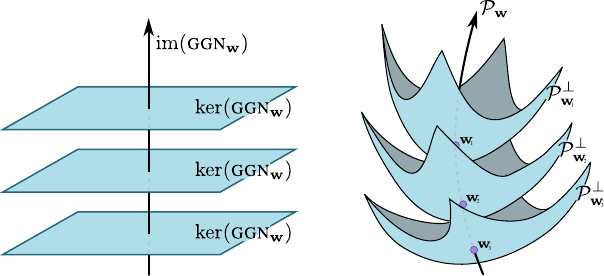}
  \caption{The weight space can be decomposed into directions of \emph{reparameterizations} and \emph{functional changes}. For linear models (left) these are linear subspaces given by the kernel and the image, respectively. For nonlinear models, these are the nonlinear manifolds $\mathcal{P}_{\w_i}^{\perp}$ and $\mathcal{P}_{\w_i}$, respectively.\looseness=-1}
  \label{fig:stacked_kernels}
\end{wrapfigure}
\paragraph{Two orthogonal subspaces.}
For any self-adjoint operator (such as positive semi-definite matrices like the {\ggn}), the \emph{image} and the \emph{kernel} orthogonally span the whole space, i.e.
\begin{equation}\label{eq:orthogonal_decomposition_linear_case}
    \im{\ggn_\w}
    \oplus
    \Ker{\ggn_\w}
    =
    \mathbb{R}^D,
\end{equation}
where the \emph{kernel} is the hyperplane of vectors that are mapped to zero and the \emph{image} is the hyperplane of vectors spanned by the operator (Fig.~\ref{fig:stacked_kernels}).
For a linearized neural network, $\im{\ggn_\w}$ spans the \emph{effective} parameters $\mathcal{P}\subset\R^D\!$, i.e.\@ the maximal set of parameters that generate different linear functions $\R^I\rightarrow\R^O$ when evaluated on the training set. \looseness=-1 

\textbf{A Laplace covariance decomposes} into the same subspaces. 
Recall that the posterior precision is $ \Sigma^{-1} = \ggn_{\wh} + \alpha \mat{I}$. Let the eigendecomposition of $\ggn_{\wh}$ be $\mat{U}^T \mat{\Lambda} \mat{U}$, and assume that $\mat{U_1}$ and $\mat{U_2}$ are the eigenvectors corresponding to the non-zero eigenvalues $\tilde{\mat{\Lambda}}$, and the zero eigenvalues respectively. These form a basis in the kernel and image subspace as discussed above. Then the covariance is,\looseness=-1
\begin{align}
\Sigma =  \left( \left[ \begin{array}{c}
            \mat{U_1} \\
            \hline
            \mat{U_2} 
            \end{array} \right]^T
            \left[ \begin{array}{c | c}
            \tilde{\mat{\Lambda}} & \mat{0}\\
            \hline
            \mat{0} & \mat{0}
            \end{array}  \right]
            \left[ \begin{array}{c}
            \mat{U_1} \\
            \hline
            \mat{U_2} 
            \end{array} \right] + \alpha \mat{I} \right)^{-1} 
       = \mat{U^{\mathrm{T}}_1} (\tilde{\mat{\Lambda}} + \alpha \mat{I}_k)^{-1} \mat{U_1} + \alpha^{-1} \mat{U^{\mathrm{T}}_2}  \mat{U_2}.
\end{align}
Consequently, we can decompose any sample from the Gaussian $\mathcal{N}(\wh, \Sigma)$ into a kernel and an image contribution,
    $\w = \wh + \w_{\mathrm{ker}} + \w_{\mathrm{im}}$,
%
where $\w_{\mathrm{ker}}$ is the component of the sample that is in the kernel of $\ggn_{\wh}$ and $\w_{\mathrm{im}}$ is in the image.
Note that all probability mass in $\Ker{\ggn_{\wh}}$ is due to the prior, i.e.\@ we place prior probability on functional reparameterizations even if we can never observe data in support of such. 

\begin{wrapfigure}[15]{r}{0.5\textwidth}
  \centering
  \vspace{-10mm}
  \includegraphics[width=\linewidth]{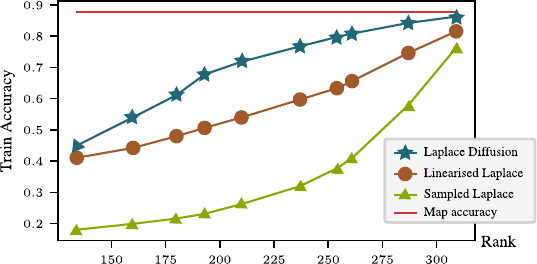}
  \vspace{-4mm}
  \caption{Underfitting of sampled Laplace is less pronounced when the rank of the \ggn{} is higher for a fixed number of parameters. This is consistent with our hypothesis as a high \ggn{} rank implies a lower dimensional kernel. For experimental details, see appendix~\ref{sec: toy_results}.}
  \label{fig:rank_plot}
\end{wrapfigure}
\textbf{Underfitting in sampled Laplace} can now be understood.
For the linearized approximation, it holds for training data $\x\in\mathcal{X}$ that, 
\begin{equation*}
    f^{\wh}_{\mathrm{lin}}(\wh \!+\! \w_{\mathrm{ker}} \!+\! \w_{\mathrm{im}}, \x) = f^{\wh}_{\mathrm{lin}}(\wh \!+\! \w_{\mathrm{im}}, \x).
\end{equation*}
Hence, the linearized predictive only samples in the image subspace consisting of unique functions. This is \emph{not} true for the sampled Laplace approximation, which also samples in the kernel subspace. Since sampled Laplace does not linearize the neural network, the kernel does not correspond to reparameterizations. It hence adds ``incorrect'' degrees of freedom to the posterior as artifacts of the Gaussian approximation. 





Empirically, sampled Laplace is only observed to underfit in overparametrized models. Fig.~\ref{fig:rank_plot} illustrates this by increasing the amount of training data to decrease the kernel rank, i.e.\@ reduce the reparametrization issue. We find that as the issue is lessened, sampled Laplace reduces its underfitting.

\section{Reparameterizations of neural networks}
We have seen that the parameters of linear models can be decomposed into two linear subspaces corresponding to reparameterizations and functional changes. We next analyze nonlinear models. 

\textbf{Intuitively}, reparameterizations of a nonlinear neural network form continuous trajectories in the parameter space (c.f.\@ Fig.~\ref{fig:relu}). We define that all points along such a trajectory are identical, which changes the weight space geometry to be a manifold. Likewise, the parameter changes corresponding to actual function changes reside on a nonlinear manifold. This is sketched in Fig.~\ref{fig:stacked_kernels}. Interestingly, the \ggn{} turns out to induce a natural (local) inner product on these nonlinear manifolds, which allows us to both understand and generalize the linearized Laplace approximation. 



\subsection{The effective-parameters quotient space}
For a \emph{nonlinear} neural network $f:\R^D\times\R^I\rightarrow\R^O$, the surfaces in weight space along which the function does not change are generally \emph{not} linear. Here, we formalize these reparameterization invariant surfaces and show that they are a partition of the weight space.
\begin{definition}
    Given a datapoint $\x\in\R^I$, for any $\w\in\R^D$ we define the \emph{$\x$-reparameterizations} as the set
        $\mathcal{R}^f_{\x}(\w)
        =
        \{
        \w' 
        \textnormal{ such that }
        f(\w', \x) = f(\w, \x) 
        \}$. 
    Consistently, given a collection of points $\mathcal{X}\subseteq\mathbb{R}^I$, we call the intersection
    $\mathcal{R}^f_{\mathcal{X}}(\w)
        =
        \bigcap_{\x\in\mathcal{X}}
        \mathcal{R}^f_{\x}(\w)$
    \emph{$\mathcal{X}$-reparameterizations}.
\end{definition}
Trivially, $\w\in\mathcal{R}^f_{\mathcal{X}}(\w)$ for any choice of $\mathcal{X}$. We next define the subset of $\mathcal{X}$-reparameterizations which can be obtained via a smooth deformation from $\w$.
\begin{definition}
    We say that a piecewise differentiable function $\gamma:[0,1]\rightarrow\mathbb{R}^D$ is a \emph{homotopy} of $(\w,\hat{\w})$ if $\gamma(0)=\w$ and $\gamma(1)=\w'$.
    The set of \emph{$\mathcal{X}$-smooth-reparameterizations} is defined as,
    \begin{equation*}
        \bar{\mathcal{R}}^f_{\mathcal{X}}(\w)
        =
        \left\{
        \w' 
        \textnormal{ such that}
        \begin{tabular}{l}
            $\exists \gamma$ a homotopy of $(\w,\w')$  \\
            $\gamma(t)\in\mathcal{R}^f_{\mathcal{X}}(\w)\,\,\forall t\in[0,1]$
        \end{tabular}
        \right\}.
    \end{equation*}
\end{definition}
A homotopy $\gamma$ is, thus, a smooth path along which all neural networks have identical predictions on $\mathcal{X}$. We consider two networks, $\w$ and $\w'$, similar if they can be connected by such a homotopy. Formally, 
we define the relation $\sim$ over $\mathbb{R}^D$ as $\w {\sim} \w'$ if $\w'\in\bar{\mathcal{R}}^f_{\mathcal{X}}(\w)$. 

We next use this relation to form a new view on the weight space $\R^D$ in which similar weights are seen as \emph{one} point. This can be realized using \emph{quotient spaces} \citep{lee2012smooth}. These are well-studied spaces that are constructed by considering a collection of points in one space as a single point in a new space. In our case, we have the following result.
\begin{lemma}\label{thm:quotient_group_existence}
    $\sim$ is an equivalence relation, i.e.\@ it is transitive, symmetric and reflexive. We can form the quotient space
        $\mathcal{P} = \mathbb{R}^D / \sim$
    of effective parameters. 
    We denote $[\w]\in\mathcal{P}$ the equivalence class of an element $\w\in\mathbb{R}^D$.
\end{lemma}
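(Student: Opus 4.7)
The plan is to verify the three axioms of an equivalence relation directly from the definition of $\bar{\mathcal{R}}^f_{\mathcal{X}}(\w)$. Before doing so I would record a short preliminary observation that I will reuse: $\mathcal{R}^f_{\mathcal{X}}(\w) = \mathcal{R}^f_{\mathcal{X}}(\w')$ whenever $f(\w,\x) = f(\w',\x)$ for all $\x \in \mathcal{X}$. This is immediate, since $\mathcal{R}^f_{\mathcal{X}}(\w)$ is by definition the level set $\{\w'' : f(\w'',\x) = f(\w,\x) \ \forall \x\in\mathcal{X}\}$, so equality of the right-hand sides forces equality of the sets. In particular, $\w\sim\w'$ (in fact even $\w'\in\mathcal{R}^f_{\mathcal{X}}(\w)$) already implies $\mathcal{R}^f_{\mathcal{X}}(\w) = \mathcal{R}^f_{\mathcal{X}}(\w')$.

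Reflexivity is immediate: the constant path $\gamma(t)\equiv\w$ is piecewise differentiable, connects $\w$ to itself, and lies in $\mathcal{R}^f_{\mathcal{X}}(\w)$ trivially. For symmetry, given a homotopy $\gamma$ certifying $\w\sim\w'$, I take the reversed path $\tilde\gamma(t) := \gamma(1-t)$. It is piecewise differentiable with endpoints $\w'$ and $\w$, and since $f(\tilde\gamma(t),\x) = f(\gamma(1-t),\x) = f(\w,\x) = f(\w',\x)$ for all $t\in[0,1]$ and $\x\in\mathcal{X}$, it lies in $\mathcal{R}^f_{\mathcal{X}}(\w')$ by the preliminary observation, so $\w'\sim\w$.

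For transitivity, given $\gamma_1$ certifying $\w\sim\w'$ and $\gamma_2$ certifying $\w'\sim\w''$, I would concatenate them as
\begin{equation*}
    \gamma(t) \,=\, \begin{cases} \gamma_1(2t), & t\in[0,1/2],\\ \gamma_2(2t-1), & t\in[1/2,1].\end{cases}
\end{equation*}
The reparametrized halves are each piecewise differentiable, and joining them at $t=1/2$ introduces at most one additional corner, which is admissible. The first half lies in $\mathcal{R}^f_{\mathcal{X}}(\w)$ by assumption; the second half lies in $\mathcal{R}^f_{\mathcal{X}}(\w')$, which equals $\mathcal{R}^f_{\mathcal{X}}(\w)$ by the preliminary observation applied to $\w\sim\w'$. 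Thus $\gamma$ certifies $\w\sim\w''$.

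The argument is essentially routine — it is the usual proof that ``connected by a path inside a prescribed set'' is an equivalence relation, specialised to the level set $\mathcal{R}^f_{\mathcal{X}}(\w)$, so I do not expect any genuine obstacle. The one point worth flagging is terminological: the statement calls $\mathcal{P}$ a \emph{quotient group}, but $\bar{\mathcal{R}}^f_{\mathcal{X}}(\vec{0})$ need not be a subgroup of $(\mathbb{R}^D,+)$, and the relation $\sim$ need not be translation-invariant. I would therefore read $\mathcal{P}$ as the quotient \emph{set} of $\sim$-classes, whose existence is automatic once $\sim$ has been shown to be an equivalence relation, and note this in passing.
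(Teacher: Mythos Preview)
Your verification is correct and follows exactly the standard argument one would expect: reflexivity via the constant path, symmetry via path reversal, and transitivity via concatenation, with the key bookkeeping step being that $\mathcal{R}^f_{\mathcal{X}}(\w)=\mathcal{R}^f_{\mathcal{X}}(\w')$ once $\w'\in\mathcal{R}^f_{\mathcal{X}}(\w)$. The paper does not actually supply a proof of this lemma---it is stated and then used---so there is no alternative approach to compare against; your remark that ``quotient group'' should really read ``quotient set'' is also well taken.
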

This quotient structure gives a rich mathematical foundation to construct reparameterization invariant neural networks.
Within the quotient, two effective parameters $[\w_1],[\w_2]\in\mathcal{P}$ are the same point if and only if $\w_1 \sim \w_2$. This means that all parameters $\w\in[\w_1]$ gives the same function over $\mathcal{X}$.


\subsection{The effective-parameters manifold}\label{sec:effect_manifold}

Geometry is the mathematical language of invariances. To this end would like to endow the weight space with a geometric structure such that two weights, $\w_1$ and $\w_2$, corresponding to the same function, have a distance of zero, i.e.
\begin{align}
  \mathrm{dist}(\w_1, \w_2) = 0 \quad \Leftrightarrow \quad \w_1 \sim \w_2.
  \label{eq:dist_zero}
\end{align}
Since the weights generate the same function, we define a metric that measures differences in \emph{function values} on the training data. Consider weights $\w$ and an infinitesimal displacement $\bm{\epsilon}$, we then define,
\begin{align}
  \mathrm{dist}^2(\w, \w+\bm{\epsilon})
    = \sum_{n=1}^N \| f(\w, \x_n) - f(\w+\epsilon, \x_n) \|^2 
    = \bm{\epsilon}^\top \ggn_\w \bm{\epsilon} + \mathcal{O}(\epsilon^3),
  \label{eq:dist_def}
\end{align}
where the last step follows from a first-order Taylor expansion of $f$ around $\w$. This is a standard \emph{pullback metric} $(f^*H)_{\w}=\ggn_\w$ commonly used in Riemannian geometry. 
This implies that the \ggn{} matrix infinitesimally defines an inner product, i.e.\@ it is a \emph{Riemannian metric}. 
By integrating over paths, the distance extend to any pair of points and satisfies Eq.~\ref{eq:dist_zero} \citep{lee2012smooth}.



\paragraph{Watch out!~It's a pseudo-metric.} We have already seen that in overparametrized models, the \ggn{} is rank-deficient, which implies that it is not positive definite. Consequently, it is not a Riemannian metric but rather a \emph{pseudo}-Riemannian metric. 
A pseudo-metric can be a counterintuitive object: two points $\w_1$ and $\w_2$ at distance zero may have different pseudo-metrics $(f^*H)_{\w_1} \neq (f^*H)_{\w_2}$. 
This is reflected in the Laplace approximation.
The covariance prescribed by the Laplace approximation is $\Sigma_{\wh}=(\nabla^2_{\w}\mathcal{L}(\wh)+\alpha\identity)^{-1}$, where $\mathcal{L}(\w)$ is shorthand for the training log-likelihood. The Hessian is exactly the pullback pseudo-metric $\nabla^2_{\w}\mathcal{L}(\wh) = (f^*H)_{\wh}$, which is \emph{not} invariant to reparameterizations of the neural network. Specifically, for a reparameterization function $g$ that is also a diffeomorphism, the change of variable rules states that,
\begin{equation}
    \underbrace{
    \nabla^2_{\w}\mathcal{L}(g(\wh))
    }_{\Sigma^{-1}_{g(\wh)} - \alpha\identity}
    =
    \nabla_{\w}g(\wh)^\top
    \underbrace{
    \nabla^2_{\w}\mathcal{L}(\wh)
    }_{\Sigma^{-1}_{\wh} - \alpha\identity}
    \nabla_{\w}g(\wh).
\end{equation}
This means that, while each parameter $\w$ has its well-defined covariance $\Sigma_{\w}$, each equivalence class does not have a unique one, since $\wh$ and $g(\wh)$ belong to the same equivalence class and $\Sigma_{\wh}\not=\Sigma_{g(\wh)}$.

\paragraph{Non-Gaussian likelihoods.}
The Euclidean distance measure in Eq.~\ref{eq:dist_def} corresponds to choosing a Gaussian likelihood. The distance definition readily extends to other likelihoods and the corresponding metric takes the form of the generalized Gauss-Newton matrix $\mat{J}_\w^\top \mat{H} \mat{J}_\w$, where $\mat{H}$ denotes the Hessian of the log-likelihood. For both Gaussian and Bernoulli likelihoods, this Hessian is positive definite, but e.g.\@ the cross entropy has a rank-deficient Hessian and, thus, induces a pseudo-metric.


\paragraph{An impractical solution.}
The unfortunate behavior of approximate posteriors assigning different probabilities to the same function could 
be rectified by marginalizing over the set of reparameterizations of $\w$, i.e.
   $\int_{\w' \in \mathcal{R}(\w)} q(\w' | \mathcal{D}) \dif\w'$.
While this construction solves the highlighted problem, its complexity makes it 
impractical and we are unaware of any works along these lines.

When restricted to a smaller class of reparameterization (the ones homotopic to the identity), the integral can be thought of as ``collapsing'' each reparameterization equivalence class to a single point in $\mathcal{P}=\R^D/\sim$ formalized in \cref{thm:quotient_group_existence}. Nontrivially, the pullback metric implicitly performs a similar operation, as shown later in \cref{thm:equivalence_quotient_and_pullback}. This connection motivates the dive into Riemannian geometry: \emph{we get a tractable approach to engaging with neural network reparameterizations}.

\subsection{Topological equivalence of the two views}
So far we described two \emph{a priori} very different objects: the quotient space $\mathcal{P} = \mathbb{R}^D/\sim$ and the pseudo-Riemannian manifold $(\mathbb{R}^D, \ggn_\w)$. We referred to both of them as \emph{effective parameters} and this is no coincidence as there is a natural relationship between the points at distance zero according to the pseudo-metric and the equivalence classes.
\begin{proposition}\label{prop:equivalence_quotient_and_pullback}
    For any $\w_0,\w_1\in\R^D$ it holds
    \begin{equation}
        d_{f^*H}(\w_0,\w_1) = 0
        \quad\Longleftrightarrow\quad
        [\w_0] = [\w_1] \in \mathcal{P}.
    \end{equation}
\end{proposition}
Even better, these two spaces share the same topological structure.
To state this we need a notion of distance on the quotient space and the most natural choice is to inherit the Euclidean distance $\|\cdot\|$ from $\mathbb{R}^D$. This distance is defined as
\begin{equation*}
    d_\mathcal{P}([\w],[\w'])
    =
    \inf
    \left\{
        \|p_1-q_1\| + \ldots + \|p_n-q_n\|
    \right\},
\end{equation*}
where the infimum is taken over all finite sequences $p_1,\ldots,p_n$ and $q_1,\ldots,q_n$ such that $[\w]=[p_1]$, $[p_{i+1}]=[q_i]$ and $[q_n]=[\w']$. 

This distance $d_\mathcal{P}$ induces a topology on the quotient space $\mathcal{P}$ which is equivalent to the topology induced by the pullback distance $d_{f^*H}$ on the pseudo-Riemannian manifold. Formally
\begin{theorem}\label{thm:equivalence_quotient_and_pullback}
    For any $\w_0,\w_1\in\R^D$, for any $\epsilon>0$ there exists $\delta>0$ such that
    \begin{align}
        d_{\mathcal{P}}([\w_0],[\w_1]) < \delta
        & \quad\Longrightarrow\quad
        d_{f^*H}(\w_0,\w_1) < \epsilon\\
        d_{f^*H}(\w_0,\w_1) < \delta
        & \quad\Longrightarrow\quad
        d_{\mathcal{P}}([\w_0],[\w_1]) < \epsilon.
    \end{align}
\end{theorem}

This result connects an abstract quotient space $\mathcal{P}$ with the pseudo-Riemannian metric $\ggn_\w$. The quotient captures useful intuitions but is difficult to leverage computationally. In contrast, the pseudo-metric has some counterintuitive aspects but we can identify the underlying Riemannian structure which leads to tractable algorithms (Sec.~\ref{sec:diffusion_posteriors}).
%
%
%


\paragraph{A tale of two manifolds.}
For any given parameter $\w\in\mathbb{R}^D$ and training set $\mathcal{X}$, we show that there exist two Riemannian manifolds $(\mathcal{P}_\w,\m)$ and $(\mathcal{P}_\w^\perp,\m^{\perp})$ embedded in $\mathbb{R}^D$, illustrated in \cref{fig:stacked_kernels}. They capture the functional change and reparameterization properties respectively, but, differently from the previously studied $(\R^D,\ggn_\w)$, they are Riemannian manifolds without degenerate directions in their metrics. Formally,

\begin{theorem}
    For any parameter $\w$ suppose the set of parameters that generate the same predictions is denoted by $\mathcal{P}_\w^\perp = \{ \w' \in \mathbb{R}^D \textnormal{ such that } f(\w',x) = f(\w,x) \textnormal{ for all }x \in \mathcal{X} \}$. Then this set is a smooth manifold embedded in $\mathbb{R}^D$. Furthermore, the set of parameters that locally generates unique predictions, $\mathcal{P}_\w$ is also a submanifold embedded in $\mathbb{R}^D$. 
\end{theorem}
%
They are the direct generalization to the nonlinear case of the two spaces involved in Eq.~\ref{eq:orthogonal_decomposition_linear_case}, where $\mathcal{P}_\w$ plays the role of the \emph{image} and $\mathcal{P}_\w^\perp$ plays the role of the \emph{kernel}. When $f$ is linear, they are identical.

In general, $\mathcal{P}_\w$ and $\mathcal{P}_\w^\perp$ intersect only in $\w$, 
and the two respective tangent spaces in $\w$ span all directions.
They can be thought of as two collections of parameters, and the associated functions have different properties: (1) $\mathcal{P}_\w^\perp$ is entirely contained in the same equivalence class $\mathcal{P}_{\w}^\perp\subseteq[\w]$, thus all the parametrized functions are identical on the train set; in contrast, (2) $\mathcal{P}_\w$ never intersects the same equivalence class more than one time, at least locally, thus the parametrized functions always changes when moving in any direction. Thus $\mathcal{P}_\w$ resembles the effective parameter manifold $\mathcal{P}$, but with the difference of being an actual Riemannian manifold.
These two manifolds exist under the assumption that Jacobian is full rank (see proof in appendix~\ref{sec: C}).

The two metrics $\m$ and $\m^{\perp}$ are not uniquely defined. A natural choice for $\m$ is to restrict $\ggn_\w$ to the tangent space of $\mathcal{P}_\w$ corresponding to the non-zero eigenvectors, i.e.\@ $\m=\ggn_\w^+$. While, for $\m^{\perp}$ we can inherit the Euclidean metric, i.e.\@ $\m^{\perp}=\alpha\mat{I}$ for $\alpha>0$. 

\section{Exploring manifolds with random walks}\label{sec:diffusion_posteriors}




\paragraph{SDEs on manifolds.}
Given a Riemannian manifold $(\M, \mat{G})$, the simplest choice of distribution that respects the Riemannian metric $\mat{G}$ is a Riemannian diffusion (or Brownian motion, c.f.\@ \citet{hsu2002stochastic}) stopped at time $t$.
This follows the stochastic differential equation \citep{girolami2011riemann},
 \begin{align}
      \dif\w = \sqrt{2\tau}\mat{G}(\w)^{-\frac{1}{2}} \dif W + \tau \Gamma \dif t 
      \qquad \textnormal{where} \quad \Gamma_i(\w) = \sum_{j=1}^D \frac{\partial}{\partial \w_j} (\mat{G}(\w)^{-1})_{ij}. 
 \end{align}
Practically speaking this simple process can be simulated using an Euler–Maruyama \citep{maruyama1955continuous} scheme. The Christoffel symbols, $\Gamma_i(\bm{\theta})$, are commonly disregarded as they have a high computational cost, and \citet{li2015preconditioned} showed that the resulting error is bounded.

Using the Euler–Maruyama integrator with step size $h_t$, setting $\tau=1$ corresponding to standard Bayesian inference and disregarding the term involving the Christoffel symbols $\Gamma$, we obtain the simple update rule 
    $\w_{t+1} = \w_t + \sqrt{2h_t}\mat{G}(\w_t)^{-\frac{1}{2}} \bm{\epsilon}$,
where $\bm{\epsilon} \sim \mathcal{N}(\vec{0}, \mat{I})$. 
%
%
 %
%
%
This applies to any Riemannian manifold. However, the effective-parameter $(\mathbb{R}^D,\ggn_\w)$ is only pseudo-Riemannian, we explore three Riemannian alternatives: $(\mathbb{R}^D,\ggn_\w + \alpha\mathbb{I})$, $(\mathcal{P}_\w^\perp,\alpha\mat{I})$ and $(\mathcal{P}_\w,\ggn_{\bm{\omega}}^+)$

\paragraph{Diffusion on $(\mathbb{R}^D,\ggn_{\w} + \alpha\mat{I})$.} 
The Laplace approximation can also be written as a diffusion on a manifold. As we saw in Sec.~\ref{2}, the Laplace approximation can be written as
    $\w|\mathcal{D} \sim \mathcal{N}(\wh, \Sigma)$ with $\Sigma^{-1} = \ggn_{\wh} + \alpha\mat{I}$.
This can also be written as a sample at $t=1$ of a Riemannian diffusion on a manifold with a constant metric, $(\mathbb{R}^D, \mat{G})$, where $\mat{G} = \ggn_{\wh} + \alpha\mat{I}$. 
The \textsc{sde} $\dif\w = \mat{G}^{-\frac{1}{2}} \dif W$ have a marginal distribution at $t=1$ that exactly match the standard Laplace approximation. 
Note that this formulation does not rely on the approximation of the \textsc{sde} that disregards the term involving the Christoffel symbols $\Gamma$ as these are zero for constant metrics. Hence, the above is exactly a Riemannian diffusion on the manifold with a constant metric given by the \ggn{} at the \textsc{map} parameter. Note that this is only a valid diffusion for $\alpha > 0$ in which case it is not reparametrization invariant.


\begin{table}
  \caption{In-distribution performance across methods trained on \textsc{mnist}, \textsc{fmnist} and \smaller[0.80]{CIFAR-10}.}
  \label{tab:id}
  %
  \setlength{\aboverulesep}{0pt}
  \setlength{\belowrulesep}{0pt}
  \setlength{\extrarowheight}{.75ex}
  \resizebox{\textwidth}{!}{
    \rotatebox[origin=c]{90}{MNIST~~~~~}\hspace{3mm}
    \begin{tabular}{lcccccc}
    \toprule \rowcolor{gray!30}
                            & Conf.~($\uparrow$)           & NLL~($\downarrow$)          & Acc.~($\uparrow$)           & Brier~($\downarrow$)        & ECE~($\downarrow$)        & MCE~($\downarrow$) \\
    \midrule
    Laplace Diffusion (ours) & \first{0.988\small{±0.001}} & \first{0.042\small{±0.007}} & \first{0.987\small{±0.002}} & \first{0.022\small{±0.003}} & \first{0.137\small{±0.019}} & \first{0.775\small{±0.043}} \\
    Sampled Laplace          & 0.589\small{±0.008}         & 3.812\small{±0.284}         & 0.146\small{±0.032}         & 1.176\small{±0.046}         & 0.443\small{±0.026}        & 0.985\small{±0.002} \\
    Linearised Laplace       & 0.968\small{±0.004}         & 0.306\small{±0.041}         & 0.926\small{±0.008}         & 0.117\small{±0.012}         & 0.251\small{±0.034}        & 0.855\small{±0.041} \\
    \bottomrule
  \end{tabular} }
    %
  \resizebox{\textwidth}{!}{
    \rotatebox[origin=c]{90}{FMNIST}\hspace{3mm}
    \begin{tabular}{lcccccc}
    Laplace Diffusion (ours) & \first{0.900\small{±0.001}} & \first{0.001\small{±0.000}} & \first{0.906\small{±0.007}} & \first{0.141\small{±0.006}} & \first{0.108\small{±0.015}} & \first{0.729\small{±0.092}} \\
    Sampled Laplace          & 0.618\small{±0.021}         & 4.507\small{±0.000}         & 0.098\small{±0.010}         & 1.295\small{±0.014}         & 0.518\small{±0.013}         & 0.986\small{±0.001} \\
    Linearised Laplace       & 0.897\small{±0.003}         & 0.423\small{±0.000}         & 0.862\small{±0.005}         & 0.207\small{±0.006}         & 0.147\small{±0.017}         & 0.756\small{±0.048} \\
    \bottomrule
  \end{tabular} }
    %
  \resizebox{\textwidth}{!}{
    \rotatebox[origin=c]{90}{CIFAR-10}\hspace{3mm}
    \begin{tabular}{lcccccc}
    Laplace Diffusion (ours) & \first{0.952\small{±0.007}} & \first{0.345\small{±0.062}} & \first{0.905\small{±0.007}} & \first{0.155\small{±0.019}} & 0.259\small{±0.008}         & 0.870\small{±0.021} \\
    Sampled Laplace          & 0.843\small{±0.004}         & 0.997\small{±0.222}         & 0.717\small{±0.049}         & 0.422\small{±0.081}         & \first{0.221\small{±0.047}} & 0.804\small{±0.080} \\
    Linearised Laplace       & 0.951\small{±0.007}         & 0.614\small{±0.020}         & 0.863\small{±0.001}         & 0.222\small{±0.002}         & 0.337\small{±0.022}         & \first{0.789\small{±0.035}} \\
    \bottomrule
  \end{tabular} }
\end{table}

\begin{table}
  \caption{Out-of-distribution \textsc{auroc}~($\uparrow$) performance for \textsc{mnist}, \textsc{fmnist} and \smaller[0.80]{CIFAR-10}.}
  \label{tab:ood}
  %
  \setlength{\aboverulesep}{0pt}
  \setlength{\belowrulesep}{0pt}
  \setlength{\extrarowheight}{.75ex}
  %
  \resizebox{\textwidth}{!}{
    \begin{tabular}{lcccccccc}
    \toprule 
    \rowcolor{gray!30} Trained on & \multicolumn{3}{c}{\rule[0.9mm]{20mm}{0.1mm} \textsc{mnist} \rule[0.9mm]{20mm}{0.1mm}} & \multicolumn{3}{c}{\rule[0.9mm]{20mm}{0.1mm} \textsc{fmnist} \rule[0.9mm]{20mm}{0.1mm}} & \multicolumn{2}{c}{\rule[0.9mm]{12mm}{0.1mm} \smaller[0.80]{CIFAR-10} \rule[0.9mm]{12mm}{0.1mm}} \\
    \rowcolor{gray!30} Tested on & \textsc{fmnist} & \textsc{emnist} & \textsc{kmnist} & \textsc{mnist} & \textsc{emnist} & \textsc{kmnist} & \smaller[0.80]{CIFAR-100} & \textsc{svhn} \\
    \midrule
    Laplace Diffusion (ours) & \first{0.909\small{±0.033}} & \first{0.625\small{±0.018}} & \first{0.929\small{±0.008}} & \first{0.759\small{±0.045}} & \first{0.741\small{±0.010}} & \first{0.749\small{±0.023}} & \first{0.851\small{±0.002}} & \first{0.862\small{±0.010}} \\
    Sampled Laplace          & 0.500\small{±0.026}         & 0.494\small{±0.006}         & 0.482\small{±0.013}         & 0.495\small{±0.037}         & 0.503\small{±0.036}         & 0.493\small{±0.033}         & 0.687\small{±0.033}         & 0.599\small{±0.038} \\
    Linearised Laplace       & 0.758\small{±0.070}         & 0.602\small{±0.027}         & 0.790\small{±0.018}         & 0.625\small{±0.050}         & 0.628\small{±0.013}         & 0.624\small{±0.020}         & 0.837\small{±0.006}         & 0.854\small{±0.024} \\
    \bottomrule
  \end{tabular} }
\end{table}

\paragraph{Kernel-manifold diffusion.}
The kernel-manifold $(\mathcal{P}_\w^\perp,\alpha\mat{I})$ consists of parameters that generate the same function over the training set. The effect of diffusion on this manifold and using the neural network predictive is similar to sampling from the kernel subspace while using the linearized predictive. On the training set the predictive variance is $0$ because it only samples reparametrizations of the \textsc{map} predictions. On out-of-distribution data, the variance is greater than $0$ if at least one of the reparameterizations on the training set is not a global reparameterization. This leads to a clear separation in the predictive variance of in-distribution and out-of-distribution data (Fig.~\ref{fig:laplace_fails}) and further implies that this diffusion distribution never underfits. Stated formally,
%
%
%
\begin{theorem}
\label{thm:5.1}
     $ \mathrm{Var}_{\w\sim \mathcal{P}_{\wh}^\perp}\left[ f(\w,\x) \right] = 0$ for train data $\x\in\mathcal{X}$.
    For a test point $\x_{t}\not\in\mathcal{X}$, if there exists a reparameterization $\w'\in\bar{\mathcal{R}}^f_{\mathcal{X}}(\wh)$ such that $\w'\not\in\bar{\mathcal{R}}^f_{\mathcal{X}\cup\{\x_{t}\}}\!(\wh)$, then
     $   \mathrm{Var}_{\w\sim\mathcal{P}_{\wh}^\perp}\left[ f(\w,\x_{t}) \right] > 0 $.
\end{theorem}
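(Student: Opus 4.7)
I would split the argument into the two stated claims, with the test-point inequality being the main content. The first claim is immediate from the construction of the kernel manifold in \cref{sec:effect_manifold}: since $\mathcal{P}_{\wh}^\perp \subseteq [\wh]$, every $\w$ supported on $\mathcal{P}_{\wh}^\perp$ satisfies $\w \sim \wh$ and hence $f(\w, \x) = f(\wh, \x)$ for every training input $\x \in \mathcal{X}$. Thus $f(\w, \x)$ is almost-surely a constant random variable under the Riemannian diffusion on $\mathcal{P}_{\wh}^\perp$, and its variance vanishes.

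For the second claim, the strategy is to exhibit an explicit point in $\mathcal{P}_{\wh}^\perp$ at which $f(\cdot, \x_t)$ differs from $f(\wh, \x_t)$, and then upgrade this single point to a set of positive diffusion mass using heat-kernel positivity. Concretely, fix $\w' \in \bar{\mathcal{R}}^f_{\mathcal{X}}(\wh) \setminus \bar{\mathcal{R}}^f_{\mathcal{X} \cup \{\x_t\}}(\wh)$ together with a witnessing homotopy $\gamma : [0,1] \to \mathbb{R}^D$ satisfying $\gamma(0) = \wh$, $\gamma(1) = \w'$, and $\gamma(s) \in \mathcal{R}^f_{\mathcal{X}}(\wh)$ for every $s$. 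If $f(\gamma(s), \x_t) = f(\wh, \x_t)$ held for all $s$, the same $\gamma$ would certify $\w' \in \bar{\mathcal{R}}^f_{\mathcal{X} \cup \{\x_t\}}(\wh)$, contradicting the choice of $\w'$. Hence some $s^{\star} \in [0,1]$ yields $\w^{\star} := \gamma(s^{\star})$ with $f(\w^{\star}, \x_t) \neq f(\wh, \x_t)$. Because $\gamma$ stays in the smooth-reparameterization class of $\wh$ and the full-Jacobian-rank hypothesis of \cref{sec:effect_manifold} locally identifies this class with the kernel manifold, we have $\w^{\star} \in \mathcal{P}_{\wh}^\perp$. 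By continuity of $f$ in its parameters there is an open neighborhood $U \subseteq \mathcal{P}_{\wh}^\perp$ of $\w^{\star}$ on which $f(\cdot, \x_t) \neq f(\wh, \x_t)$, and strict positivity of the heat kernel of Brownian motion on a smooth connected Riemannian manifold gives $\mathbb{P}(\w \in U) > 0$. The variance is therefore strictly positive.

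The main obstacle I expect is the bridge from the topological description of $[\wh]$ as the set of smooth-reparameterization-equivalent weights to the concrete Riemannian structure of $\mathcal{P}_{\wh}^\perp$: $\mathcal{P}_{\wh}^\perp$ is defined locally around $\wh$, whereas $\w'$ may a priori lie far away. One either reroutes $\gamma$ so that the initial segment containing $\w^{\star}$ stays in one chart of $\mathcal{P}_{\wh}^\perp$ around $\wh$, or globalizes the manifold construction to the full connected component of $\wh$, showing it remains smooth with the flat metric $\alpha\mat{I}$; both routes rely on the full-rank Jacobian assumption from \cref{sec:effect_manifold}. Once this manifold-level bookkeeping is in place, heat-kernel positivity on a smooth connected Riemannian manifold is standard, and the probability estimate becomes routine.
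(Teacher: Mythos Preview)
Your plan matches the paper's own proof in structure: for the first claim, the inclusion $\mathcal{P}_{\wh}^\perp \subseteq [\wh]$ forces $f(\cdot,\x)$ to be constant on training inputs; for the second, one exhibits a point in the kernel manifold where $f(\cdot,\x_t)$ differs from its value at $\wh$ and concludes that the variance is strictly positive. The paper's argument is in fact considerably terser than yours---it asserts directly that $\w'\notin\bar{\mathcal{R}}^f_{\mathcal{X}\cup\{\x_t\}}(\wh)$ yields $f(\w',\x_t)\neq f(\wh,\x_t)$ and that non-constancy on $\mathcal{P}_{\wh}^\perp$ gives positive variance, without your homotopy step to locate $\w^\star$, without the heat-kernel positivity argument, and without flagging the local-versus-global identification of $[\wh]$ with $\mathcal{P}_{\wh}^\perp$. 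So your proposal is aligned with, and strictly more careful than, the paper's own proof; the obstacle you highlight is real and is simply left implicit there.
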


\paragraph{Non-kernel-parameter manifold diffusion.}
The non-kernel-parameter manifold $(\mathcal{P}_\w,\ggn_{\bm{\omega}}^+)$ consists of parameters that generate unique functions over the training set. Diffusion on this manifold samples functions that are necessarily different from the \textsc{map} predictions on the training set. However, the predictive variance in the training set is bounded such that the functional diversity in the predictive samples reflects the intrinsic variance of the training data (Fig.~\ref{fig:laplace_fails}). 

This is the only considered diffusion that acts on a Riemannian manifold while being reparametrization invariant, i.e.\@ $\bar{\mathcal{R}}^f_{\mathcal{X}}(\w)=\{\w\}$. We call this \emph{Laplace diffusion} and study it empirically in Sec.~\ref{sec:experiments}.

\section{Related work}
Bayesian deep learning techniques are still in their infancy and generally involve poorly understood approximations. The arguably most popular tool for uncertainty quantification is \emph{ensembles} \citep{lakshminarayanan2017simple, hansen1990neural}. Several approaches make Gaussian approximations to the true posterior, including \emph{`Bayes by backprop'} \citep{blundell2015weight}, \emph{stochastic weight averaging} (\textsc{swag}) \citep{maddox2019simple} and the \emph{Laplace approximation} \citep{mackay1992laplace, daxberger2022laplace, antorán2023samplingbased, deng2022accelerated, miani:neurips:2022}. 

The high dimensionality of the weight space gives rise to significant computational challenges when constructing Bayesian approximations. This has motivated various low-rank approximations \citep[review in][]{NEURIPS2021_a7c95857}, e.g.\@ \emph{last layer} approximations \citep{pmlr-v119-kristiadi20a}, \emph{subnetwork inference} \citep{daxberger2021bayesian}, \emph{subspace inference} \citep{izmailov2020subspace} or even \textsc{pca} in weight space \citep{maddox2019simple}. Such approaches lessen the computational load, while often improving predictive performance. Our analysis sheds light on why crude approximations perform favorably: \emph{smaller models are less affected by reparameterization issues}. Our diffusion process, thus, provides an alternative, and less heuristic, path forward.\looseness=-1

\citet{mackay1998choice} noted the importance of the choice of basis in Laplace approximations; our pseudo-Riemannian view can be seen as having a continuously changing basis.
\citet{kristiadi2023geometry} studied how a metric transforms under a bijective differentiable change of variables. 
They enforce geometric consistency, highlighting, e.g., the non-invariance of the \ggn{} to a change of variables. \citet{petzka2019reparameterization, jang2022reparametrization} point to the same inconsistency with an emphasis on flatness measures. 

\citet{kim2022scale} and \citet{antoran2022adapting} study global (rather than data-dependant) reparametrizations associated with specialized architectures. While analytic expressions can be obtained, the results do not apply to general networks.
While not expressed in terms of reparametrizations, \citet{izmailov2021dangers} show that linearly dependent datasets give rise to a hyperplane in the kernel manifold. \citet{kim2024gex} also study the kernel of the $\ggn$ in the context of influence functions. These works characterize subsets of the reparametrization group. We provide the first architecture-agnostic characterization of all continuous reparametrizations. 

In a closely related work, \citet{bergamin2024riemannian} introduced a Riemannian Laplace approximation \citep{hauberg:SN:2018} that improves posterior fit over a range of tasks. Furthering this line of research, \citet{yu2023riemannian} explored the use of the Fisher information metric within this framework. While sharing the language of Riemannian geometry, our work focuses on analyzing the effectiveness of linearized Laplace within the context of neural network reparametrization, instead of primarily aiming to achieve better posterior approximations. This allows us to gain deeper insights into the underlying mechanisms that contribute to the success of this approximation technique. \looseness=-1


\begin{SCfigure}[0.7][t]
  \includegraphics[width=0.55\linewidth]{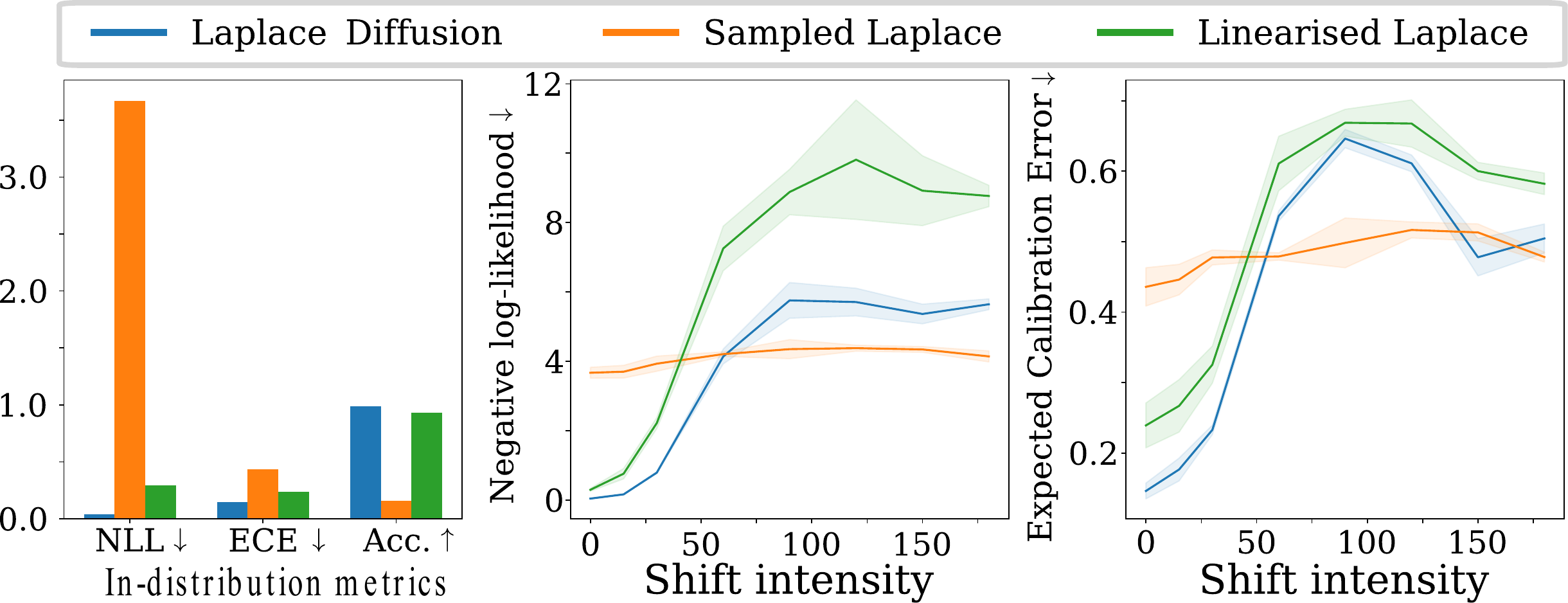}
  \caption{Benchmark results for Rotated \textsc{mnist} (similar results for \textsc{fmnist} and \textsc{cifar} are in appendix~\ref{sec:robustness}). Sampled Laplace significantly underfits even for non-rotated data. Laplace diffusion consistently outperforms the other methods.}
  \label{fig:rotation_plots}
\end{SCfigure}

\section{Experiments}\label{sec:experiments}

\renewcommand{\thefootnote}{}
\footnotetext{Code: {\texttt{https://github.com/h-roy/geometric-laplace}}.}
\vspace{-3mm}

We benchmark Laplace diffusion with neural network predictive against linearized and sampled Laplace to validate the developed theory. Implementation details are in Appendix~\ref{sec: numerics}. We will show that the diffusion posterior slightly outperforms linearized Laplace in terms of both in-distribution fit and out-of-distribution detection. For completeness, we include comparisons to other baselines such as SWAG, diagonal Laplace, and last-layer Laplace in Appendix~\ref{sec: more baselines}. Laplace diffusion is competitive with the best-performing Bayesian methods despite using the neural network predictive (i.e.\@ no linearization). This contrasts sampled Laplace which severely underfits. 
This is evidence that the developed theory explains the key challenges of Bayesian deep learning.


\paragraph{Experimental details (appendix~\ref{sec: more baselines}).}
We train a  44,000-parameter LeNet\citep{lecun1989backpropagation} on \textsc{mnist} and \textsc{fmnist} as well as a 270,000-parameter ResNet\citep{he2016deep} on \textsc{cifar-10}\citep{krizhevsky2009learning}. We sample from the Laplace approximation of the posterior and our Laplace diffusion. For the samples from the Laplace approximation, we consider both the linearized predictive and the neural network predictive, while for diffusion samples, we only consider the neural network predictive. These baselines were chosen to be as similar as possible to our approach to ease the comparison. We use the same prior precision for all methods to ensure a fair comparison. 

\paragraph{In-distribution performance (Table~\ref{tab:id}).}
We measure the in-distribution performance of different posteriors on a held-out test set.  We report means $\pm$ standard deviations of several metrics: Confidence, Accuracy, Negative Log-Likelihood, Brier Score \citep{VERIFICATIONOFFORECASTSEXPRESSEDINTERMSOFPROBABILITY}, Expected Calibration Error \citep{naeini2015obtaining} and Mean Calibration Error. We observe that Laplace diffusion has the best calibration and fit. We also confirm the underfitting of Sampled Laplace across cases. For \textsc{cifar-10} we had to use a large prior precision to get meaningful samples from sampled Laplace, which explains the less severe underfitting. High prior precision is known to help with underfitting in sampled Laplace, but it also shrinks predictive uncertainty to almost zero. 
\paragraph{Robustness to dataset shift (Fig.~\ref{fig:rotation_plots}, appendix~\ref{sec:robustness}).}
We use \textsc{rotated-MNIST}, \textsc{rotated-fmnist}, and \textsc{rotated-cifar} to asses model-calibration and model fit under distribution shift. Fig.~\ref{fig:rotation_plots} plots negative log-likelihood (\textsc{nll}) and expected calibration error (\textsc{ece}) against the degrees of rotation. Laplace diffusion improves on other Laplace approximations.\looseness=-1

\paragraph{Out-of-distribution detection (Table~\ref{tab:ood}).}
On out-of-distribution data from other benchmarks, we see that Laplace diffusion outperforms the other Laplace approximations.

\section{Conclusion}
While approximate Bayesian inference excels in many areas, it continues to face challenges in deep learning. Techniques that work well in shallow models struggle with deep ones even if they remain computationally tractable. This suggests that overparametrization plays a negative role in Bayesian models. Our theoretical analysis shows how overparametrization creates a growing reparameterization issue that conflicts with standard Euclidean approximate posteriors, such as the ever-present Gaussian. For small models this issue is negligible, but as models grow, so does the reparameterization issue.

Our geometric analysis also suggests a solution: we should consider approximate posteriors that respect the group structure of the reparameterizations. We observe that the generalized Gauss-Newton (\ggn) matrix commonly used in Laplace approximations induces a pseudo-Riemannian structure on the parameter space that respects the topology of the reparameterization group. This implies that we can use pseudo-Riemannian probability distributions as approximate posteriors, and we experimented with the obvious choice of a geometric diffusion process. We also showed that the state-of-the-art \emph{linearized} Laplace approximation can be viewed as a naïve (or simple) numerical approximation to our proposed diffusion. This helps explain the success of the linearized approximation.

Our proposed approximate posterior does have issues. While sampling has the same complexity as standard Laplace approximations, it increases runtime by a constant factor. Common Laplace approximations do not sample according to the \ggn{} but rather approximate this matrix with a diagonal or block-diagonal matrix. Mathematically, such approximations break the motivational reparameterization invariance, so it is unclear if such approaches should be applied in our framework. Our work, thus, raises the need for new computational pipelines for engaging with the \ggn{} matrix.

\begin{ack}
This work was supported by a research grant (42062) from VILLUM FONDEN. This project received funding from the European Research Council (ERC) under the European Union’s Horizon 2020 research and innovation programme (grant agreements 757360 and 101123955). The work was partly funded by the Novo Nordisk Foundation through the Center for Basic Machine Learning Research in Life Science (NNF20OC0062606). In addition to the ERC (above), PH, MP and LT thank the International Max Planck Research School for Intelligent Systems (IMPRS-IS) for support, and gratefully acknowledge financial support by the DFG Cluster of Excellence “Machine Learning - New Perspectives for Science”, EXC 2064/1, project number 390727645; the German Federal Ministry of Education and Research (BMBF) through the Tübingen AI Center (FKZ: 01IS18039A); and funds from the Ministry of Science, Research and Arts of the State of Baden-Württemberg. The authors are also grateful to Magnus Waldemar Hoff Harder for alerting us to imprecisions in an early draft of this manuscript.
\end{ack}

\bibliography{references}
\bibliographystyle{icml2024}

\newpage
\appendix
\onecolumn
\section{Recap}
\paragraph{Notation.}
Consider a function $f: \R^D \times \R^I \rightarrow \R^O$ with Jacobian $\J_{\w}(\x) = \partial_{\w} f_{\w}(\x)|_{\w = \w} \in \R^{O \times D}$ with respect to $\w$ evaluated in $\x$ and $\w$. For a given log-likelihood we define the Hessian w.r.t.\@ to the output $\Hess_{\w}(\x) = -\partial^2_{f_{\w}(\x)} \log p(\y | f_{w}(\x)) \in \R^{O \times O}$ and we assume it not to be dependent on $\y$ (which is true, for example, for exponential families).

Consider being given a dataset of finite size $N$, here we do not care about labels and we only refer to the collections of the datapoints $\mathcal{X}=\{\x_1,\ldots,\x_N\}\subset\R^I$.

Consider the stacking of the per-datum Jacobians $\J_{\w} = [\J_{\w}(\x_1); \ldots; \J_{\w}(\x_N)] \in \R^{NO \times D}$, where we dropped the dependence on $\mathcal{X}$. Similarly consider $\Hess_{\w} = \text{diag}(\Hess_{\w}(\x_1); \ldots; \Hess_{\w}(\x_N)) \in \R^{NO \times NO}$ the block diagonal stacking of the Hessians.

Consider the Generalized Gauss-Newton (\ggn) and the Neural Tangent Kernel (\ntk) matrices
\begin{equation}
    \ggn_\w = \J_\w^\top \Hess_{\w} \J_\w
    \in \R^{D \times D}
    \qquad\qquad
    \ntk_\w = \Hess_{\w}^{\nicefrac{1}{2}}\J_\w \J_\w^\top \Hess_{\w}^{\nicefrac{1}{2}}
    \in \R^{NO \times NO}.
\end{equation}

Recall also that the pullback pseudo-metric is $(f^*H)_\w = \ggn_\w$.

\paragraph{Assumptions.} 
We assume \emph{uniform} upper and lower bound on the eigenvalues of the \ntk\, matrix, that is
\begin{equation}
    \exists l,L\in\R
    \quad\textnormal{ such that }\qquad
    0 < l \leq \frac{\|\ \ntk_\w v\|}{\|v\|} \leq L
    \quad\forall v\in\R^{NO}, \forall\w\in\R^D,
\end{equation}
where uniform means uniform over parameters, i.e. the bounds $l,L$ holds for every $\w$. Moreover we assume that the Jacobian function $\w\mapsto J_\w(\x)$ is Lipschitz for every $\x\in\mathcal{X}$.

\paragraph{How unreasonable are the assumptions?} 
The assumption of an upper bound $L$ is equivalent to assuming that $\w\mapsto f(\w,\x)$ is Lipschitz for each datapoint $\x\in\mathcal{X}$. This is true with all standard activation functions if we restrict the parameter space to a ball of fixed radius.

The assumption of a lower bound $l$ is strongly supported by the literature on \ntk, thanks to its direct implications on memorization capacity and generalization. The general trend is that the more overparametrized the network is, the stronger such lower bounds are. In the hardest setting of minimum overparametrization, \citet{bombari2022memorization} proved a bound that holds with high probability for fully connected MLPs at initialization. Similar results hold with bigger overparametrizations \cite{nguyen2021tight, allen2019convergence, du2019gradient}. Building on top of that, other lines of work \cite{liu2020linearity, oymak2019overparameterized, oymak2020toward} proved that the \ntk\, does not change too much during training thanks to the PL inequality framework, and in particular the proximity of the neural network dynamics to the one described by \ntk, is supported by spectral bounds on the Hessian of the landscape.

Lastly, the Lipschitzness assumption on the Jacobian is potentially the most unrealistic, although it would hold, for example, if the derivatives of activations are Lipschitz and the parameters are restricted to a finite radius ball. Nonetheless, we emphasize that this assumption is only used to control that the kernel of the \ggn\, does not ``rotate too fast'', which is a much weaker assumption, but also much more cluttered to state formally.

\section{Proof for equivalence of the two settings}

This section contains the proof of 
\cref{prop:equivalence_quotient_and_pullback} and 
\cref{thm:equivalence_quotient_and_pullback} involving the pseudo-Riemannian manifold $(\mathbb{R}^D, f^*H)$ and the quotient group $(\mathcal{P}, d_\mathcal{P})$ with Euclidean-induced metric.

To ease the readability, we recall the two involved notions of distance and their respective definition:
\begin{itemize}
    \item $d_{f^*H}(\w_0,\w_1)$ the geodesic distance for any two parameters $\w_0,\w_1\in\R^D$
    \item $d_{\mathcal{P}}([\w_0,\w_1])$ the quotient Euclidean distance for any two equivalence classes $[\w_0],[\w_1]\in\mathcal{P}$
\end{itemize}
\begin{align}
    d_{f^*H}(\w_0,\w_1)
    &= \inf_\gamma \textsc{len}_{f^*H}(\gamma) \\
    &= \inf_\gamma \int_0^1 \|\gamma'(t)\|_{f^*H_{\gamma(t)}} \textnormal{d} t \\
    &= \inf_\gamma \int_0^1 \sqrt{\gamma'(t)^\top \cdot f^*H_{\gamma(t)} \cdot \gamma'(t)} \textnormal{d} t,
\end{align}
where the infimum is taken over smoothly differentiable curves $\gamma:[0,1]\rightarrow\R^D$ such that $\gamma(0)=\w_0$ and $\gamma(1)=\w_1$. 
\begin{align}
    d_\mathcal{P}([\w_0],[\w_1])
    =
    \inf
    \left\{
        \sum_{i=1}^n \|p_i-q_i\|
    \right\},
\end{align}
where the infimum is taken over all finite sequences $\{p_i\}_{i=1\ldots n},\{q_i\}_{i=1\ldots n}\subset\R^D$ such that $[\w_0]=[p_1]$, $[p_{i+1}]=[q_i]$ and $[q_n]=[\w_1]$.

Let us state a more quantitative theorem
\begin{theorem}
    For any $\w_0,\w_1\in\R^D$ it holds
    \begin{equation}\label{eq:reformulation_prop_equivalence}
        d_{f^*H}(\w_0,\w_1) = 0
        \quad\Longleftrightarrow\quad
        d_{\mathcal{P}}([\w_0,\w_1]) = 0,
    \end{equation}
    and also that, for any $\epsilon>0$ there exists $\delta>0$ such that
    \begin{align}
        d_{\mathcal{P}}([\w_0,\w_1]) < \delta
        & \quad\Longrightarrow\quad
        d_{f^*H}(\w_0,\w_1) < \epsilon \label{eq:reformulation_thm_equivalenceA}\\
        d_{f^*H}(\w_0,\w_1) < \delta
        & \quad\Longrightarrow\quad
        d_{\mathcal{P}}([\w_0,\w_1]) < \epsilon, \label{eq:reformulation_thm_equivalenceB}
    \end{align}
\end{theorem}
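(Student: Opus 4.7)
I plan to derive the zero-distance equivalence \eqref{eq:reformulation_prop_equivalence} as the $\epsilon\to 0$ limit of the quantitative bounds \eqref{eq:reformulation_thm_equivalenceA} and \eqref{eq:reformulation_thm_equivalenceB}, so the whole theorem and also \cref{prop:equivalence_quotient_and_pullback} reduce to establishing two-sided Lipschitz-type estimates between $d_{f^*H}$ and $d_\mathcal{P}$. The two estimates are of very different difficulty: the easier uses only the uniform upper bound $L$ on the \ntk\ spectrum, while the harder requires both the lower bound $l$ and the Lipschitz assumption on $\w\mapsto\J_\w$.

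\textbf{Easy direction \eqref{eq:reformulation_thm_equivalenceA}.} Given any admissible sequence $\{p_i\},\{q_i\}$ witnessing $d_\mathcal{P}([\w_0],[\w_1])$, I lift it into a piecewise-smooth curve in $\R^D$ by alternating (i)~Euclidean segments $q_i\rightsquigarrow p_{i+1}$, whose $f^*H$-length is at most $\sqrt{L}\,\|p_{i+1}-q_i\|$ because all eigenvalues of $\ggn_\w$ are bounded by $L$ along the segment, and (ii)~smooth homotopies inside each equivalence class $[p_{i+1}]=[q_i]$, which have $\ggn_{\gamma(t)}\gamma'(t)=0$ and therefore zero pullback length. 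Summing these bounds over $i$ and taking the infimum over admissible sequences yields $d_{f^*H}(\w_0,\w_1)\leq\sqrt{L}\,d_\mathcal{P}([\w_0],[\w_1])$, and the implication follows with $\delta=\epsilon/\sqrt{L}$.

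\textbf{Hard direction \eqref{eq:reformulation_thm_equivalenceB}.} I start from a near-geodesic curve $\gamma$ for $d_{f^*H}$ and manufacture an admissible sequence for $d_\mathcal{P}$. At every $\w$ the tangent space splits as $\Ker{\ggn_\w}\oplus\im{\ggn_\w}$: motion in the kernel direction is a smooth reparameterization (hence free in $d_\mathcal{P}$) and has zero pullback norm, while motion in the image satisfies $\|v\|\leq l^{-1/2}\|v\|_{f^*H}$ via the \ntk\ lower bound. The plan is to partition $\gamma$ into $n$ short arcs of $f^*H$-length $\eta_i$, project the endpoint of each arc back onto the equivalence class of its starting point along the kernel-manifold $\mathcal{P}_\w^\perp$ (whose local existence is established in appendix~\ref{sec: C}), and declare the $(q_i,p_{i+1})$ pairs to be the original and projected endpoints. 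The Euclidean gap $\|p_{i+1}-q_i\|$ then equals the image-component displacement, which by the lower bound is of order $l^{-1/2}\eta_i+o(\eta_i)$. Summing over the discretization and refining gives $d_\mathcal{P}([\w_0],[\w_1])\leq l^{-1/2}d_{f^*H}(\w_0,\w_1)$, and \eqref{eq:reformulation_thm_equivalenceB} follows with $\delta=\sqrt{l}\,\epsilon$.

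\textbf{Main obstacle.} The central difficulty is making the ``project back onto $[\w]$'' step uniform along $\gamma$: the kernel of $\ggn_\w$ rotates as $\w$ moves and $\mathcal{P}_\w^\perp$ is only defined in a neighbourhood. The Lipschitz assumption on $\J_\w$ is what controls this rotation rate at $O(\eta_i)$, and combined with the uniform lower NTK bound $l$ it yields an inverse-function-theorem neighbourhood of uniform radius on which the projection exists and has Lipschitz constant arbitrarily close to $1$. The work is to verify that the resulting $o(\eta_i)$ error terms are uniform in $i$, so that they vanish when the mesh is refined; this is the only place the two assumptions are used jointly. Once both quantitative bounds are in hand, sending $\epsilon\to 0$ simultaneously proves \eqref{eq:reformulation_prop_equivalence} and exhibits the identity as a bijection $\mathcal{P}\leftrightarrow\R^D/\{d_{f^*H}=0\}$, thereby also establishing \cref{prop:equivalence_quotient_and_pullback}.
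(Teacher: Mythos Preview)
Your plan is correct and closely parallels the paper's proof. Both quantitative directions use precisely the mechanisms you outline: for \eqref{eq:reformulation_thm_equivalenceA}, concatenating straight Euclidean segments (pullback length $\leq\sqrt{L}\,\|\cdot\|$ via the upper \ntk\ bound) with zero-pullback-length homotopies inside equivalence classes; for \eqref{eq:reformulation_thm_equivalenceB}, discretizing a near-geodesic and splitting each increment into kernel and image components, with the image part controlled Euclideanly by the lower \ntk\ bound and the kernel rotation controlled by the Jacobian-Lipschitz assumption (this is the paper's \cref{prop:control_on_metric_rotation}).

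Two minor deviations are worth recording. First, the paper proves \eqref{eq:reformulation_prop_equivalence} independently via the chain $d_{f^*H}=0\Leftrightarrow[\w_0]=[\w_1]\Leftrightarrow d_\mathcal{P}=0$ (two separate propositions), whereas you obtain it as the $\epsilon\to 0$ limit of the quantitative bounds; your route is more economical and entirely valid. Second, in the hard direction the paper uses the \emph{linear} kernel projection $p_i=\gamma(t_i)+P_i^K(\gamma(t_{i+1})-\gamma(t_i))$ rather than projecting onto the nonlinear manifold $\mathcal{P}_\w^\perp$; your nonlinear projection via the inverse function theorem is more scrupulous about guaranteeing $[p_i]=[\gamma(t_i)]$ exactly, which the linear projection only achieves to first order. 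One slip to fix: in your easy direction the Euclidean segments should be $p_i\rightsquigarrow q_i$ (these are the pairs whose Euclidean norms are summed in the definition of $d_\mathcal{P}$), and the zero-length homotopies connect $q_i\rightsquigarrow p_{i+1}$ inside $[q_i]=[p_{i+1}]$; you have the labels swapped.
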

and note that by definition of bijection \cref{eq:reformulation_prop_equivalence} is equivalent to \cref{prop:equivalence_quotient_and_pullback} and, by definition of homeomorphism,  \cref{eq:reformulation_thm_equivalenceA} and \cref{eq:reformulation_thm_equivalenceB} are equivalent to \cref{thm:equivalence_quotient_and_pullback}.

We prove the 3 points separately, in \cref{sec:proof_prop_equivalence}, \cref{sec:proof_thm_equivalenceA} and \cref{sec:proof_thm_equivalenceB} respectively.

\subsection{Proof of \cref{eq:reformulation_prop_equivalence}}
\label{sec:proof_prop_equivalence}
The proof logic is 
\begin{equation}
    d_{f^*H}(\w_0,\w_1) = 0
    \quad\Longleftrightarrow\quad
    [\w_0] = [\w_1]
    \quad\Longleftrightarrow\quad
    d_{\mathcal{P}}([\w_0,\w_1]) = 0.
\end{equation}
and we prove the two steps in the two following Propositions, respectively.

\begin{proposition}\label{prop:same_equivalence_class_has_zero_geodesic_distance}
    For any $\w_0,\w_1\in\R^D$ it holds
    \begin{equation}
        d_{f^*H}(\w_0,\w_1) = 0
        \quad\Longleftrightarrow\quad
        [\w_0]=[\w_1].
    \end{equation}
\end{proposition}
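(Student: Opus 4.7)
The plan is to prove the two directions separately, as they are highly asymmetric in difficulty. For the easy direction $[\w_0] = [\w_1] \Rightarrow d_{f^*H}(\w_0,\w_1) = 0$, I would take any homotopy $\gamma$ witnessing $\w_1 \in \bar{\mathcal{R}}^f_{\mathcal{X}}(\w_0)$. By definition $f(\gamma(t), \x_n) = f(\w_0, \x_n)$ for every $t \in [0,1]$ and every $\x_n \in \mathcal{X}$, so differentiating where $\gamma$ is smooth yields $\J_{\gamma(t)}(\x_n)\gamma'(t) = 0$ for each $n$. Stacking gives $\J_{\gamma(t)}\gamma'(t) = 0 \in \R^{NO}$, hence
\begin{equation*}
\|\gamma'(t)\|^2_{f^*H_{\gamma(t)}} = \gamma'(t)^{\top} \J_{\gamma(t)}^{\top} \Hess_{\gamma(t)} \J_{\gamma(t)} \gamma'(t) = 0
\end{equation*}
almost everywhere, so $\textsc{len}_{f^*H}(\gamma) = 0$ and $d_{f^*H}(\w_0,\w_1) = 0$.

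The converse implication is the heart of the proposition and must leverage the assumptions collected in Appendix A. The uniform lower bound on the eigenvalues of $\ntk_\w$ combined with the boundedness of $\Hess$ makes the stacked forward map $F : \w \mapsto (f(\w,\x_n))_{n=1}^N \in \R^{NO}$ a submersion of uniform rank $NO$; each fiber $F^{-1}(y)$ is therefore a smooth codimension-$NO$ submanifold of $\R^D$. The Lipschitz assumption on $\w \mapsto \J_\w$ will then provide a tubular neighborhood of uniform thickness $\rho > 0$ around every fiber, together with a smooth nearest-point projection $\pi$ onto the fiber.

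Given $d_{f^*H}(\w_0,\w_1) = 0$, I would pick a sequence of paths $\gamma_k$ from $\w_0$ to $\w_1$ with $\textsc{len}_{f^*H}(\gamma_k) \to 0$. Since $(F \circ \gamma_k)'(t) = \J_{\gamma_k(t)} \gamma_k'(t)$, the chain rule and boundedness of $\Hess$ bound the Euclidean arc length of $F \circ \gamma_k$ by a constant multiple of $\textsc{len}_{f^*H}(\gamma_k) \to 0$; the triangle inequality in $\R^{NO}$ then forces $F(\w_0) = F(\w_1)$ and confines the image curve $F \circ \gamma_k$ to an arbitrarily small ball around $F(\w_0)$. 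Using the uniform-submersion estimate $\mathrm{dist}_{\R^D}(\w, F^{-1}(F(\w_0))) \leq C\,\|F(\w) - F(\w_0)\|$ available inside the tubular neighborhood, for $k$ large the whole trajectory $\gamma_k$ must lie inside the tubular neighborhood. Then $\tilde\gamma_k := \pi \circ \gamma_k$ is a piecewise smooth curve contained entirely in the fiber $F^{-1}(F(\w_0))$, and since $\pi$ fixes points already on the fiber, $\tilde\gamma_k(0) = \w_0$ and $\tilde\gamma_k(1) = \w_1$. This is precisely a homotopy witnessing $\w_1 \in \bar{\mathcal{R}}^f_{\mathcal{X}}(\w_0)$, concluding $[\w_0] = [\w_1]$.

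The hard part will be establishing the uniformity of the tubular neighborhood: fibers are typically unbounded, so standard tubular-neighborhood theorems give only a position-dependent thickness that could in principle shrink to zero at infinity along the fiber. Getting a genuinely uniform $\rho > 0$ is exactly where the uniform $\ntk$ bounds and Jacobian Lipschitzness from Appendix A will do the heavy lifting, effectively turning $F$ into a ``uniform submersion'' in a quantitative Implicit-Function-Theorem sense. A secondary but easy technical point is upgrading the projected path $\tilde\gamma_k$ to piecewise differentiable regularity, which follows from smoothness of $\pi$ on the tubular neighborhood.
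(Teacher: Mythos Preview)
Your $\Leftarrow$ direction is essentially identical to the paper's. The difference is entirely in $\Rightarrow$.

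The paper does something much shorter there: it simply asserts that $d_{f^*H}(\w_0,\w_1)=0$ implies the existence of an actual zero-length path $\gamma$ from $\w_0$ to $\w_1$, and then observes via the Fundamental Theorem of Calculus that $\J_{\gamma(t)}\gamma'(t)=0$ almost everywhere forces $f(\gamma(t),\x)$ to be constant for each $\x\in\mathcal{X}$. Hence the zero-length path already lies entirely in the fiber $\mathcal{R}^f_{\mathcal{X}}(\w_0)$ and is itself the required homotopy. No tubular neighborhoods, no projection, no sequence.

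Your route is genuinely different: you refuse to assume the infimum is attained, work instead with a near-minimizing sequence $\gamma_k$, push forward to $\R^{NO}$ to get $F(\w_0)=F(\w_1)$, and then project $\gamma_k$ onto the fiber through a uniform tubular neighborhood. This is more careful---in a general pseudo-metric the existence of a zero-length minimizer is not automatic, and the paper does not justify it. Your approach closes that gap, and the uniform $\ntk$ lower bound together with the Lipschitz Jacobian are exactly what make the global estimate $\mathrm{dist}_{\R^D}(\w, F^{-1}(F(\w_0)))\le C\|F(\w)-F(\w_0)\|$ and the uniform-thickness tubular neighborhood work; the ``hard part'' you flag is real but is indeed where those assumptions earn their keep. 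The trade-off: the paper's argument is a few lines once the minimizer is granted, whereas yours needs the submersion/tubular machinery. In effect your projected curve $\tilde\gamma_k$ \emph{is} the zero-length path the paper presupposes, so your argument can be read as supplying the missing justification for the paper's shortcut.
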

\begin{proof}
    By definition $[\w_0]=[\w_1]$ if and only if there exists a piecewise differentiable $\gamma:[0,1]\rightarrow\R^D$ such that $\gamma(0)=\w_0$, $\gamma(1)=\w_1$ and $f(\w_0,\x)=f(\gamma(t),\x)$ for any $t\in[0,1]$ and $\x\in\mathcal{X}$. Then \\
    \boxed{\Longleftarrow} 
    Consider a $\gamma$ from the definition of the equivalence relation $\sim$ and define the points $\w_t=\gamma(t)$ for ease of notation. Then for any $t\in[0,1]$
    \begin{equation}
        \gamma'(t) = \lim_{\epsilon\rightarrow0} \frac{\w_{t+\epsilon}-\w_t}{\epsilon}.
    \end{equation}
    For any $\x\in\mathcal{X}$ it holds that $f(\w_t,\x)=f(\w_{t'},\x) \forall t,t'\in[0,1]$, which implies that $f(\w_{t+\epsilon},\x)-f(\w_t,\x)=0$ $\forall t\in[0,1]\forall\epsilon\in[0,1-t]$. Thus,
    \begin{equation}
        0
        =
        \lim_{\epsilon\rightarrow0} \frac{f(\w_{t+\epsilon},\x)-f(\w_t,\x)}{\epsilon}
        =
        \J_{\w_t}(\x) \cdot \lim_{\epsilon\rightarrow0} \frac{\w_{t+\epsilon}-\w_t}{\epsilon}
        =
        \J_{\w_t}(\x) \cdot \gamma'(t).
    \end{equation}
    This holds to any $\x\in\mathcal{X}$, so the same holds for the per-datum stacked jacobians $\J_{\w_t} \cdot \gamma'(t) = 0$. Thus,
    \begin{equation}
        \|\gamma'(t)\|^2_{f^*H_{\gamma(t)}}
        =
        \gamma'(t)^\top \cdot f^*H_{\gamma(t)} \cdot \gamma'(t)
         =
        \gamma'(t)^\top \cdot \J_{\w_t}^\top \Hess_{\w_t} \J_{\w_t} \cdot \gamma'(t)
        =
        0
    \end{equation}
    and we can measure the length of $\gamma$ in the pullback metric as
    \begin{equation}
        \textsc{len}_{f^*H}(\gamma)
        =
        \int_0^1 \|\gamma'(t)\|_{f^*H_{\gamma(t)}} \textnormal{d} t
        =
        \int_0^1 0 \,\textnormal{d} t
        =
        0,
    \end{equation}
    which gives an upper bound on the geodesic distance
    \begin{equation}
        d_{f^*H}(\w_0,\w_1)
        = \inf_{\hat{\gamma}} \textsc{len}_{f^*H}(\hat{\gamma})
        \leq
        \textsc{len}_{f^*H}(\gamma)
        =
        0,
    \end{equation}
    thus, $d_{f^*H}(\w_0,\w_1)=0$ and this implication is proven.

    \boxed{\Longrightarrow} $d_{f^*H}(\w_0,\w_1)=0$ implies that there exists a 0-length differentiable $\gamma:[0,1]\rightarrow\R^D$ such that $\gamma(0)=\w_0$, $\gamma(1)=\w_1$. Without loss of generality, we can assume $\gamma$ to be non-stationary, i.e. $\gamma'(t)\not=0$. Here 0-length means
    \begin{equation}
        0
        =
        \textsc{len}_{f^*H}(\gamma)
        =
        \int_0^1 \|\gamma'(t)\|_{f^*H_{\gamma(t)}} \textnormal{d} t,
    \end{equation}
    which implies that $\|\gamma'(t)\|_{f^*H_{\gamma(t)}} = 0$ for any $t\in[0,1]$ except a zero-measure set which we can neglect later. Then
    \begin{equation}
        0
        =
        \|\gamma'(t)\|^2_{f^*H_{\gamma(t)}}
        =
        \gamma'(t)^\top \cdot \J_{\w_t}^\top \Hess_{\w_t} \J_{\w_t} \cdot \gamma'(t)
        \quad
        \Longrightarrow
        \quad
        \J_{\w_t} \cdot \gamma'(t)=0,
    \end{equation}
    by positive definitess of $ \Hess_{\w_t}$, assumed as hypothesis. We highlight that the leftmost 0 in the previous equation is a scalar, while the rightmost 0 is a vector in $\R^{NO}$ as obtained by the matrix-vector product of $\J_{\w_t}\in\R^{NO\times D}$ with $\gamma'(t)\in\R^D$. Looking at the equation $\J_{\w_t} \cdot \gamma'(t)=0$ componentwise implies that
    \begin{equation}
        \langle \nabla_\w [f(\w_t, \x)]_o , \gamma'(t) \rangle 
        =
        0 \quad \forall \x\in\mathcal{X}, \forall o\in\{1,\ldots,O\}, \forall t\in[0,1],
    \end{equation}
    where $[v]_o$ refers to the $o$th component of a vector $v$. Thus, for $T\in[0,1]$, by Fundamental Theorem of Calculus we have
    \begin{equation}
        [f(\w_T, \x)]_o - [f(\w_0, \x)]_o
        =
        \int_0^T \langle \nabla_\w [f(\w_t, \x)]_o , \gamma'(t) \rangle  \textnormal{d}t
        = 0
        \qquad \forall \x\in\mathcal{X}, \forall o\in\{1,\ldots,O\}.
    \end{equation}
    Then $f(\w_T, \x) = f(\w_0, \x)\, \forall \x\in\mathcal{X}$ and $\forall T\in[0,1]$. So we proved that $\gamma$ is an homotopy of $(\w,\w')$ such that $\gamma(t)\in\mathcal{R}^f_{\mathcal{X}}(\w)\,\,\forall t\in[0,1]$. Thus,
    \begin{equation}
        \w'\in\bar{\mathcal{R}}^f_{\mathcal{X}}(\w)
        \quad\Longrightarrow\quad
        \w'\sim\w
        \quad\Longrightarrow\quad
        [\w']=[\w],
    \end{equation}
    and this completes the proof.
\end{proof}

\begin{proposition}
    For any $\w_0,\w_1\in\R^D$ it holds
    \begin{equation}
        [\w_0] = [\w_1]
        \quad\Longleftrightarrow\quad
        d_{\mathcal{P}}([\w_0,\w_1]) = 0.
    \end{equation}
\end{proposition}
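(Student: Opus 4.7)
The forward direction is immediate: if $[\w_0] = [\w_1]$, taking $n = 1$ and $p_1 = q_1 = \w_0$ satisfies all the chaining conditions with total cost $0$, so $d_\mathcal{P}([\w_0], [\w_1]) = 0$. The content of the statement is the converse, which I would attack in two steps: first, translate the vanishing zigzag distance into an equality of function values using Lipschitzness, and second, upgrade this equality to a smooth homotopy using the uniform submersion structure provided by the NTK lower bound.

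For the first step, fix $\epsilon > 0$ and pick a zigzag $\{p_i, q_i\}_{i=1}^n$ with $\sum_i \|p_i - q_i\| < \epsilon$. The chain conditions give $f(p_1, \x) = f(\w_0, \x)$, $f(q_n, \x) = f(\w_1, \x)$, and $f(q_i, \x) = f(p_{i+1}, \x)$ for every $\x \in \mathcal{X}$. Combining these with $\|f(p_i, \x) - f(q_i, \x)\| \le L \|p_i - q_i\|$ (from the uniform bound on the Jacobian) and telescoping yields $\|f(\w_0, \x) - f(\w_1, \x)\| \le L\epsilon$ for every $\x \in \mathcal{X}$. Sending $\epsilon \to 0$ gives $f(\w_0, \x) = f(\w_1, \x)$ for all $\x \in \mathcal{X}$, so $\w_0$ and $\w_1$ lie on the same fiber $\mathcal{R}^f_\mathcal{X}(\w_0)$.

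For the second step, I would exploit that the uniform NTK lower bound makes the stacked map $\w \mapsto (f(\w, \x_n))_{n=1}^N$ a uniform submersion, so a quantitative implicit function theorem produces constants $\delta, C > 0$ independent of the base point such that, for any $\w$ whose function values lie within $\delta$ of those of $\w_0$, there is a canonical smooth projection $\pi(\w) \in \mathcal{R}^f_\mathcal{X}(\w_0)$ satisfying $\|\pi(\w) - \w\| \le C \sum_n \|f(\w, \x_n) - f(\w_0, \x_n)\|$. The telescoping argument from the first step also controls the intermediate function values along the whole zigzag, so choosing $\epsilon < \delta / L$ guarantees that the entire concatenated path—built from the within-class smooth homotopies $\w_0 \to p_1$, $q_i \to p_{i+1}$, $q_n \to \w_1$ together with the straight-line jumps $p_i \to q_i$—lives in the tube on which $\pi$ is defined. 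Applying $\pi$ pointwise then produces a continuous piecewise-smooth path from $\pi(\w_0) = \w_0$ to $\pi(\w_1) = \w_1$ lying entirely inside $\mathcal{R}^f_\mathcal{X}(\w_0)$; by definition this is a homotopy witnessing $\w_1 \in \bar{\mathcal{R}}^f_\mathcal{X}(\w_0)$, hence $[\w_0] = [\w_1]$.

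The main obstacle I anticipate is making the implicit function theorem quantitative and uniform in the base point: I will need to combine the NTK lower bound (for quantitative invertibility of the linearization) with the Lipschitz Jacobian hypothesis (for a uniform bound on the nonlinear remainder) to produce the same $\delta, C$ at every $\w$. Without this uniformity the construction would only yield local projections and the zigzag could wander out of the domain of $\pi$; with it, continuity of $\pi$ together with the fact that $\pi$ fixes the fiber ensures the projected path stays in the same path-component as $\w_0$, which is exactly the equivalence class $[\w_0]$.
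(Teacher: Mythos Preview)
Your first step---telescoping the Lipschitz bound along the zigzag to obtain $f(\w_0,\x)=f(\w_1,\x)$ for all $\x\in\mathcal{X}$---is exactly the paper's argument, and the paper stops there, writing ``thus $[\w_0]=[\w_1]$'' immediately after. You correctly flag that this only yields $\w_1\in\mathcal{R}^f_{\mathcal{X}}(\w_0)$, not $\w_1\in\bar{\mathcal{R}}^f_{\mathcal{X}}(\w_0)$; the paper silently conflates the level set with its path-component. Your second step, which the paper omits entirely, is a sound way to supply the missing homotopy, and the obstacle you anticipate can in fact be bypassed: the uniform \ntk{} lower bound makes the stacked evaluation $F(\w)=(f(\w,\x_n))_n$ a submersion with $\|J_\w^+\|\le l^{-1/2}$ \emph{everywhere}, so the horizontal lift $\psi'(s)=J_{\psi(s)}^+\bigl(F(\w_0)-F(\w)\bigr)$, $\psi(0)=\w$, has bounded right-hand side and hence solves on all of $[0,1]$, giving a globally defined smooth retraction $\pi(\w)=\psi(1)$ onto the fibre with precisely your displacement bound and no tube radius $\delta$ needed. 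Projecting the concatenated zigzag through $\pi$ then yields a piecewise-differentiable path from $\w_0$ to $\w_1$ inside $\mathcal{R}^f_{\mathcal{X}}(\w_0)$, and continuity of $\pi$ keeps it in the path-component of $\w_0$ as you say. So your proposal is strictly more complete than the paper's own proof here. (Incidentally, your forward-direction witness $p_1=q_1=\w_0$ is also cleaner than the paper's choice $p_1=\w_0$, $q_1=\w_1$, which does not obviously have zero cost.)
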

\begin{proof}
    \boxed{\Longrightarrow} This arrow is trivially true by considering the two sequences in the definition of the quotient distance to be of length one and such that $p_1=\w_0$ and $q_1=\w_1$.
    
    \boxed{\Longleftarrow} $d_{\mathcal{P}}([\w_0,\w_1]) = 0$ implies that, by definition of $\inf$, there exists a sequence $\epsilon_m\rightarrow0$ such that $\forall m\in\mathbb{N}$ there exists two finite sequences of points $p_1^{(m)},\ldots,p_n^{(m)}\in\R^D$ and $q_1^{(m)},\ldots,q_n^{(m)}\in\R^D$ such that
    \begin{equation}
        \sum_{i=1}^n \|p_i^{(m)}-q_i^{(m)}\| = \epsilon_m,
    \end{equation}
    where $[\w_0]=[p_1^{(m)}]$, $[p_{i+1}^{(m)}]=[q_i^{(m)}]$ and $[q_n^{(m)}]=[\w_1]$. Note that the sequence length $n$ may depend on $m$.      \\
    Lipschitzness of $f$ in parameters, assumed by hypothesis, means that
    \begin{equation}
        \|p-q\| < \epsilon
        \quad\Longrightarrow\quad
        \|f(p,\x) - f(q,\x)\| < L\epsilon 
        \qquad
        \forall p,q\in\R^D,\forall\x\in\mathcal{X},\forall\epsilon>0,
    \end{equation}
    thus,
    \begin{equation}
        \sum_{i=1}^n \|p_i^{(m)}-q_i^{(m)}\| = \epsilon_m
        \quad\Longrightarrow\quad
        \sum_{i=1}^n \|f(p_i^{(m)},\x) - f(q_i^{(m)},\x)\| < 2L\epsilon_m
        \qquad
        \forall\x\in\mathcal{X}.
    \end{equation}
    Also, by definition of the equivalence class, it holds $\forall\x\in\mathcal{X}$ that
    \begin{align}
        [\w_0]=[p_1^{(m)}]
        &\quad\Longrightarrow\quad
        \|f(\w_0,\x) - f(p_1^{(m)},\x)\| = 0 \\
        [p_{i+1}^{(m)}]=[q_i^{(m)}]
        &\quad\Longrightarrow\quad
        \|f(p_{i+1}^{(m)},\x) - f(q_{i}^{(m)},\x)\| = 0 
        \quad
        \\
        [q_n^{(m)}]=[\w_1]
        &\quad\Longrightarrow\quad
        \|f(q_n^{(m)},\x) - f(\w_1,\x)\| = 0.
    \end{align}
    Thus, by the triangular inequality, the last four equations imply
    \begin{align}
      \begin{split}
        \|f(\w_0,\x) - f(\w_1,\x)\|
        & \leq
        \|f(\w_0,\x) - f(p_1^{(m)},\x)\|
        +
        \|f(p_1^{(m)},\x) - f(q_n^{(m)},\x)\|\\
        &+
        \|f(q_n^{(m)},\x) - f(\w_1,\x)\| 
      \end{split} \\
        & =
        \|f(p_1^{(m)},\x) - f(q_n^{(m)},\x)\| \\
        &\leq
        \sum_{i=1}^n \|f(p_i^{(m)},\x) - f(q_i^{(m)},\x)\|
        +
        \sum_{i=1}^{n-1} \|f(p_{i+1}^{(m)},\x) - f(q_i^{(m)},\x)\| \\
        & <
        2L\epsilon_m + \sum_{i=1}^{n-1} 0 
        = 2L\epsilon_m,
    \end{align}
    and this holds for any $m\in\mathbb{N}$. Taking the limit $m\rightarrow\infty$, $\epsilon_m\rightarrow0$ implies $\|f(\w_0,\x) - f(\w_1,\x)\|\leq0$. Thus, $f(\w_0,\x) = f(\w_1,\x)$ $\forall\x\in\mathcal{X}$, and, thus, $[\w_0] = [\w_1]$, which completes the proof.
\end{proof}

\subsection{Proof of \cref{eq:reformulation_thm_equivalenceA}}
\label{sec:proof_thm_equivalenceA}
In order to prove \cref{eq:reformulation_thm_equivalenceA}, we first prove a weaker statement

\begin{proposition}\label{prop:close_euclidean_imply_close_in_pullback_metric}
    For any $\w_0,\w_1\in\R^D$ and for any $\delta>0$ it holds that
    \begin{equation}
        \|\w_0 - \w_1\| < \delta
        \quad\Longrightarrow\quad
        d_{f^*H}(\w_0,\w_1) < L\delta .
    \end{equation}    
\end{proposition}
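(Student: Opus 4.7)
The plan is to exploit the fact that $d_{f^*H}$ is defined as an infimum over all smooth curves connecting the endpoints. It then suffices to exhibit one admissible curve whose pullback length is smaller than $L\delta$, and the natural candidate is the Euclidean straight line.

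Concretely, I would set $\gamma(t) = (1-t)\w_0 + t\w_1$ for $t\in[0,1]$, which is smooth, satisfies $\gamma(0)=\w_0$, $\gamma(1)=\w_1$, and has constant velocity $\gamma'(t) = \w_1 - \w_0$. By the definition of the pullback pseudo-metric its length reads
\begin{equation}
    \textsc{len}_{f^*H}(\gamma)
    =
    \int_0^1 \sqrt{(\w_1-\w_0)^\top \ggn_{\gamma(t)} (\w_1-\w_0)}\, \textnormal{d}t.
\end{equation}

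The key step is then an eigenvalue comparison. Since $\ggn_\w = \J_\w^\top \Hess_\w \J_\w$ and $\ntk_\w = \Hess_\w^{\nicefrac{1}{2}} \J_\w \J_\w^\top \Hess_\w^{\nicefrac{1}{2}}$ share the same non-zero eigenvalues (by the standard cyclic identity applied to $AB$ and $BA$), the uniform upper bound $\|\ntk_\w\|_{\text{op}}\leq L$ from the assumptions transfers to $v^\top \ggn_\w v \leq L\|v\|^2$ for every $v\in\R^D$ and every $\w\in\R^D$. Applying this with $v=\w_1-\w_0$ bounds the integrand uniformly in $t$ by $\sqrt{L}\,\|\w_1-\w_0\| < L\delta$ (absorbing the square root into $L$ with the paper's normalisation, under which $L$ is the Lipschitz constant of $\w\mapsto f(\w,\x)$ so that $\|\J_\w\|\leq L$ directly gives $v^\top\ggn_\w v\leq L^2\|v\|^2$ when $\|\Hess_\w\|\leq 1$).

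Integrating over $t\in[0,1]$ yields $\textsc{len}_{f^*H}(\gamma) < L\delta$, and since $d_{f^*H}(\w_0,\w_1)=\inf_{\hat\gamma}\textsc{len}_{f^*H}(\hat\gamma)\leq \textsc{len}_{f^*H}(\gamma)$, the claim follows. I do not expect any genuine obstacle here: the argument is the standard domination of a Riemannian (pseudo-)norm by the ambient Euclidean norm whenever the metric tensor has a uniformly bounded operator norm, and the only ingredient that is not purely routine is the identification of the non-zero spectra of $\ggn_\w$ and $\ntk_\w$, which is a one-line cyclic-trace computation.
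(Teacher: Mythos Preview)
Your proposal is correct and follows the paper's proof essentially verbatim: take the straight-line segment $\gamma(t)=(1-t)\w_0+t\w_1$, bound its pullback length pointwise using the uniform upper bound on the metric, and conclude by the infimum definition of $d_{f^*H}$. Your observation about the $\sqrt{L}$ versus $L$ normalisation is also apt---the paper implicitly uses the Lipschitz-constant reading of $L$ in this step, exactly as you suspected.
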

\begin{proof}
    Let $\gamma:[0,1]\rightarrow\R^D$ be defined as $\gamma(t)=(1-t)\w_0 + t\w_1$, then 
    \begin{equation}
      \begin{split}
        \textsc{len}_{f^*H}(\gamma)
       & =
        \int_0^1 \|\gamma'(t)\|_{f^*H_{\gamma(t)}} \textnormal{d} t 
        =
        \int_0^1 \|\w_1-\w_0\|_{f^*H_{\gamma(t)}} \textnormal{d} t  \\
       & \leq
        \int_0^1 L \|\w_1-\w_0\| \textnormal{d} t 
        =
        L \|\w_1-\w_0\|,
      \end{split}
    \end{equation}
    and thus,
    \begin{equation}
        d_{f^*H}(\w_0,\w_1)
        =
        \inf_{\hat{\gamma}} \textsc{len}_{f^*H}(\hat{\gamma})
        \leq
        \textsc{len}_{f^*H}(\gamma)
        =
        L \|\w_1-\w_0\|
        < L\delta.
    \end{equation}
\end{proof}

\begin{definition}
    Let $\gamma_1,\ldots,\gamma_n:[0,1]\rightarrow\R^D$ a sequence of piecewise differentiable paths such that
    \begin{equation}
        \gamma_{i-1}(1) = \gamma_{i}(0)
        \quad
        \forall i\in\{0,\ldots,n\}.
    \end{equation}
    Consider the piecewise differentiable path that is the \emph{concatenation} of the paths one after the other, $\hat{\gamma}=\textsc{cat}(\gamma_1,\ldots,\gamma_n)$ defined as
    \begin{equation}
        \hat{\gamma}(t) = \gamma_i\left(nt - i\right) 
        \quad\textnormal{ if } 
        i\leq t \leq i+1.
    \end{equation}
\end{definition}
It is straightforward to see that $\textsc{len}(\hat{\gamma}) = \sum_{i=1}^n\textsc{len}(\gamma_i)$.

The choice of $\epsilon,\delta$ that prove \cref{eq:reformulation_thm_equivalenceA} trivially follows from the following 
\begin{proposition}
    With the assumption of eigenvalue upper bound $L$, for any $\w_0,\w_1\in\R^D$ for any $\delta>0$ it holds that
    \begin{equation}
        d_{\mathcal{P}}([\w_0,\w_1]) < \delta
        \quad\Longrightarrow\quad
        d_{f^*H}(\w_0,\w_1) < 3L\delta .
    \end{equation}
\end{proposition}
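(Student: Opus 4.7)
The plan is to combine the already-established infinitesimal-to-local bound of \cref{prop:close_euclidean_imply_close_in_pullback_metric} with the zero-length paths constructed inside equivalence classes in \cref{prop:same_equivalence_class_has_zero_geodesic_distance}, and then concatenate them. First, I would unpack the hypothesis $d_{\mathcal{P}}([\w_0], [\w_1]) < \delta$ using the definition of infimum: there exist finite sequences $p_1, \ldots, p_n$ and $q_1, \ldots, q_n$ in $\R^D$ with $[\w_0] = [p_1]$, $[q_i] = [p_{i+1}]$, $[q_n] = [\w_1]$, and $\sum_{i=1}^n \|p_i - q_i\| < \delta$.

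Next, I would build a single piecewise-differentiable path $\hat\gamma$ from $\w_0$ to $\w_1$ by concatenating $2n+1$ pieces in order: a path from $\w_0$ to $p_1$, then from $p_1$ to $q_1$, then from $q_1$ to $p_2$, and so on, ending with a path from $q_n$ to $\w_1$. For the pieces of the form $q_i \to p_{i+1}$ (and also $\w_0 \to p_1$ and $q_n \to \w_1$), the endpoints lie in the same equivalence class, so the homotopy $\gamma$ extracted in the proof of \cref{prop:same_equivalence_class_has_zero_geodesic_distance} provides an explicit path of pullback-length exactly $0$. For the pieces of the form $p_i \to q_i$, \cref{prop:close_euclidean_imply_close_in_pullback_metric} (applied through the explicit straight-line path used in its proof) gives a path of pullback-length at most $L \|p_i - q_i\|$.

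Since lengths add under concatenation, the total pullback-length of $\hat\gamma$ is bounded by
\begin{equation*}
  \textsc{len}_{f^*H}(\hat\gamma) \;\leq\; \sum_{i=1}^n L\,\|p_i - q_i\| \;<\; L\delta,
\end{equation*}
and taking the infimum over all connecting paths yields $d_{f^*H}(\w_0, \w_1) \leq L\delta < 3L\delta$, which is the desired conclusion (the factor $3$ is slack, accommodating possible strictness issues and matching the form of \cref{eq:reformulation_thm_equivalenceB}).

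\paragraph{Main obstacle.}
The only subtle point is verifying that the in-class connecting paths actually have length $0$, not merely infimum $0$; this is true because the proof of \cref{prop:same_equivalence_class_has_zero_geodesic_distance} produces an explicit homotopy whose tangent lies in $\ker \J_{\gamma(t)}$ pointwise, so its integral against $f^*H$ vanishes identically. Once that observation is made, everything else is a bookkeeping argument on lengths of concatenated paths, so no deeper technical machinery beyond the two cited propositions is required.
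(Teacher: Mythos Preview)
Your proposal is correct and follows essentially the same concatenation strategy as the paper: alternate zero-length in-class paths (from \cref{prop:same_equivalence_class_has_zero_geodesic_distance}) with straight-line segments bounded via \cref{prop:close_euclidean_imply_close_in_pullback_metric}, then sum. The only difference is bookkeeping: the paper extracts sequences with $\sum_i \|p_i-q_i\| < 2\delta$ and allows each in-class path a small positive length budget $L\delta/(n+1)$, yielding the stated $3L\delta$; your sharper observation that the homotopy from the $\Longleftarrow$ direction of \cref{prop:same_equivalence_class_has_zero_geodesic_distance} has pullback-length \emph{exactly} zero (tangent in $\ker \J_{\gamma(t)}$ pointwise) lets you reach $L\delta$ directly, so the factor $3$ is indeed pure slack.
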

\begin{proof}
    $d_{\mathcal{P}}([\w_0,\w_1]) < \delta$ implies that, by the definition of $\inf$, there exists two sequences $\{p_i\}_{i=1\ldots n},\{q_i\}_{i=1\ldots n}\subset\R^D$ such that $[\w_0]=[p_1]$, $[p_{i+1}]=[q_i]$, $[q_n]=[\w_1]$ and such that
    \begin{equation}
        \sum_{i=1}^n \|p_i-q_i\| < 2\delta.
    \end{equation}
    Now the idea is to define $n+1$ paths on the equivalence classes and $n$ paths connecting them, then the stacking of the $2n+1$ will give an upper bound on the geodesic distance.

    Let us first define the paths $\gamma_{2i}$ for $i=0,\ldots,n$ by making use of \cref{prop:same_equivalence_class_has_zero_geodesic_distance}
    \begin{itemize}
        \item $[\w_0]=[p_1]$ imply that there exists $\gamma_0:[0,1]\rightarrow\R^D$ be such that $\gamma_0(0)=\w_0$, $\gamma_0(1)=p_1$ and such that $\textsc{len}_{f^*H}(\gamma_0)<\nicefrac{L\delta}{n+1}$
        \item $[q_n]=[\w_1]$ imply that there exists $\gamma_{2n}:[0,1]\rightarrow\R^D$ be such that $\gamma_{2n}(0)=q_n$, $\gamma_{2n}(1)=\w_1$ and such that $\textsc{len}_{f^*H}(\gamma_{2n})<\nicefrac{L\delta}{n+1}$
        \item for all $i=1,\ldots,n-1$, $[p_{i+1}]=[q_i]$ imply that there exists $\gamma_{2i}:[0,1]\rightarrow\R^D$ be such that $\gamma_{2i}(0)=p_{i+1}$, $\gamma_{2i}(1)=q_i$ and such that $\textsc{len}_{f^*H}(\gamma_{2i})<\nicefrac{L\delta}{n+1}$
    \end{itemize}
    And then for all $i=1,\ldots,n$, \cref{prop:close_euclidean_imply_close_in_pullback_metric} and $\sum_{i=1}^n \|p_i-q_i\| < 2\delta$ imply $\gamma_{2i-1}:[0,1]\rightarrow\R^D$ be such that $\gamma_{2i-1}(0)=p_{i}$, $\gamma_{2i-1}(1)=q_i$ and such that $\sum_{i=1}^n \textsc{len}_{f^*H}(\gamma_{2i-1})<2L\delta$.

    Then concatenating these $2n+1$ paths, we have
    \begin{align}
        d_{f^*H}(\w_0,\w_1)
        & =
        \inf_{\hat{\gamma}} \textsc{len}_{f^*H}(\hat{\gamma})
        \leq
        \textsc{len}_{f^*H}(\textsc{cat}(\gamma_0,\ldots,\gamma_{2n})) \\
        & 
        =
        \sum_{i=0}^{2n} \textsc{len}_{f^*H}(\gamma_{i}) 
        = 
        \sum_{i=1}^n \textsc{len}_{f^*H}(\gamma_{2i-1})
        +
        \sum_{i=0}^n \textsc{len}_{f^*H}(\gamma_{2i})
        < 2L\delta + L\delta.
    \end{align}
\end{proof}

\subsection{Proof of \cref{eq:reformulation_thm_equivalenceB}}
\label{sec:proof_thm_equivalenceB}

\begin{proposition}\label{prop:control_on_metric_rotation}
    The pullback metric does not ``rotate'' too much when moving from $\w$ to $\w+\epsilon$, formally
    \begin{equation}
        (1 - K\|\epsilon\|)\|v\|_{f^*H_{\w+\epsilon}}
        \leq
        \|v\|_{f^*H_\w}
        \leq
        (1 + K\|\epsilon\|)\|v\|_{f^*H_{\w+\epsilon}}.
    \end{equation}
\end{proposition}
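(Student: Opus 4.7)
The plan is to rewrite the pullback pseudo-metric in the factorized form
\begin{equation*}
\|v\|_{f^*H_\w}^2 \;=\; v^\top \J_\w^\top \Hess_\w \J_\w\, v \;=\; \|A_\w v\|^2, \qquad A_\w := \Hess_\w^{\nicefrac{1}{2}} \J_\w \in \R^{NO\times D},
\end{equation*}
so that the claim reduces to a multiplicative perturbation bound $\|A_\w v\| \approx \|A_{\w+\epsilon} v\|$ on a matrix-valued function of $\w$. By symmetry (swap $\w \leftrightarrow \w+\epsilon$), it suffices to prove the upper bound; the lower one then follows from $(1+x)^{-1} \geq 1-x$ for small $x$, after shrinking $K$.

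First I would establish that $\w \mapsto A_\w$ is Lipschitz in operator norm. Splitting the increment as
\begin{equation*}
A_\w - A_{\w+\epsilon} = \bigl(\Hess_\w^{\nicefrac{1}{2}} - \Hess_{\w+\epsilon}^{\nicefrac{1}{2}}\bigr)\J_\w + \Hess_{\w+\epsilon}^{\nicefrac{1}{2}}\bigl(\J_\w - \J_{\w+\epsilon}\bigr),
\end{equation*}
the assumed Lipschitzness of $\w \mapsto \J_\w$ on $\mathcal{X}$ handles the second term, while the first is controlled via the chain rule and the Lipschitz regularity of $\Hess_\w$ on the training set. This yields a global constant $K_0 = K_0(L, \text{Lip}(\J))$ with $\|A_\w - A_{\w+\epsilon}\|_{\mathrm{op}} \leq K_0 \|\epsilon\|$, and hence by the triangle inequality
\begin{equation*}
\|A_\w v\| \;\leq\; \|A_{\w+\epsilon} v\| + K_0\|\epsilon\|\,\|v\|.
\end{equation*}

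Next I convert this additive estimate into a multiplicative one using the uniform \textsc{ntk} lower bound $l$. Since $\ntk_{\w+\epsilon} = A_{\w+\epsilon} A_{\w+\epsilon}^\top$ has all eigenvalues in $[l, L]$, the adjoint $A_{\w+\epsilon}^\top$ is injective with $\|A_{\w+\epsilon}^\top u\| \geq \sqrt{l}\|u\|$. For $v \in \mathrm{im}(A_{\w+\epsilon}^\top)$, writing $v = A_{\w+\epsilon}^\top u$ gives $\|v\| \leq \sqrt{L}\|u\|$ and $\|A_{\w+\epsilon} v\| = \|\ntk_{\w+\epsilon} u\| \geq l\|u\|$, so $\|v\| \leq (\sqrt{L}/l)\|A_{\w+\epsilon} v\|$. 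Substituting, one obtains the claimed inequality with $K := K_0 \sqrt{L}/l$.

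The main obstacle is extending the argument from $\mathrm{im}(A_{\w+\epsilon}^\top)$ to arbitrary $v \in \R^D$, since the pseudo-metric has a genuine kernel in the overparametrized regime. I would decompose $v = v_{\parallel} + v_{\perp}$ orthogonally along $\mathrm{im}(A_{\w+\epsilon}^\top) \oplus \ker(A_{\w+\epsilon})$. For the image part, the argument above applies directly. For the kernel part, $A_{\w+\epsilon} v_\perp = 0$, so $A_\w v_\perp = (A_\w - A_{\w+\epsilon})v_\perp$ is already of order $\|\epsilon\|\|v_\perp\|$; the key point is that, by Lipschitzness of $\J$, the kernel itself cannot rotate faster than $O(\|\epsilon\|)$, so $v_\perp$ is at distance $O(\|\epsilon\|\|v\|)$ from $\ker(A_\w)$. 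Folding these two contributions into the constant gives the stated bound, with $K$ depending only on $l$, $L$, and $\mathrm{Lip}(\J)$, and is exactly the place where the \emph{uniform} \textsc{ntk} lower bound and Jacobian Lipschitz assumptions stated in the Recap are essential.
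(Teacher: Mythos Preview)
The paper's own proof is a single sentence: ``Follows from the $K$-Lipschitz assumption on $\w\mapsto\J_\w$.'' Your write-up is therefore already far more detailed than what the paper supplies, and the core of it (rewrite the pseudo-norm as $\|A_\w v\|$, use Lipschitzness of $A_\w$ to get an additive perturbation, then upgrade to a multiplicative one via the uniform \textsc{ntk} lower bound) is exactly the right expansion of that one-liner and matches the paper's intent.

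There is, however, a genuine gap in your last paragraph, and it is not one you can patch by ``folding into the constant''. The multiplicative inequality, read literally for all $v\in\R^D$, is false: if $v\in\ker(A_{\w+\epsilon})\setminus\ker(A_\w)$ then the right-hand side of the upper bound is zero while the left-hand side is strictly positive. Your decomposition $v=v_\parallel+v_\perp$ produces
\[
\|A_\w v\|\;\leq\;(1+K\|\epsilon\|)\,\|A_{\w+\epsilon}v\|\;+\;K_0\|\epsilon\|\,\|v_\perp\|,
\]
and the residual $K_0\|\epsilon\|\|v_\perp\|$ depends on $v_\perp$ alone, whereas $\|A_{\w+\epsilon}v\|$ depends only on $v_\parallel$; no rescaling of $K$ absorbs it. The ``kernel cannot rotate faster than $O(\|\epsilon\|)$'' observation is true but does not help here, since it only says $v_\perp$ is close to $\ker(A_\w)$, not that $A_\w v_\perp$ is small relative to $A_{\w+\epsilon}v$.

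The paper glosses over this too. In its single application of the proposition (the proof immediately following), the vector $v=\gamma(t_{i+1})-p_i$ is by construction in $\im{\ggn_{\gamma(t_i)}}$, i.e.\ in $\mathrm{im}(A_\w^\top)$. On that subspace one has $\|v\|\leq l^{-1/2}\|A_\w v\|$, so the additive bound $\bigl|\,\|A_\w v\|-\|A_{\w+\epsilon}v\|\,\bigr|\leq K_0\|\epsilon\|\|v\|$ converts directly into the desired multiplicative form. Your image-case argument already delivers this; the kernel case is both unprovable as stated and unnecessary for the paper.
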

\begin{proof}
    Follows from the $K$-Lischitz assumption on $\w\mapsto\J_\w$ 
\end{proof}

The choice of $\epsilon,\delta$ that prove \cref{eq:reformulation_thm_equivalenceB} trivially follows from the following 
\begin{proposition}
    With the assumption of eigenvalue upper bound $L$, for any $\w_0,\w_1\in\R^D$ for any $\delta>0$ it holds that
    \begin{equation}
        d_{f^*H}(\w_0,\w_1) < \delta 
        \quad\Longrightarrow\quad
        d_{\mathcal{P}}([\w_0,\w_1]) < \frac{6\delta}{l}.
    \end{equation}
\end{proposition}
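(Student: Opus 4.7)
The strategy dualizes the proof of \cref{prop:close_euclidean_imply_close_in_pullback_metric}: a short curve in the pullback pseudo-metric will be lifted to a finite sequence of short Euclidean jumps that bridge successive reparameterization classes, interleaved with free motion along each class. The \ntk{} lower bound $l$ enters as uniform quantitative control on how cheaply (in Euclidean norm) one can move between infinitesimally close level sets of the map $F:\R^D\to\R^{NO}$, $F(\w)=(f(\w,\x_1),\ldots,f(\w,\x_N))$, because it implies that $\J_\w = dF_\w$ has smallest singular value $\geq\sqrt{l}$ uniformly in $\w$, hence $F$ admits a local right-inverse of operator norm $\leq 1/\sqrt{l}$ everywhere. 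Two points are in the same equivalence class (up to homotopy) precisely when they are smoothly connected inside a level set of $F$, so this right-inverse is exactly what we need to ``step'' between nearby classes.

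\textbf{Construction.} By the infimum definition of $d_{f^*H}$, pick a piecewise smooth curve $\gamma:[0,1]\to\R^D$ joining $\w_0$ and $\w_1$ with $\textsc{len}_{f^*H}(\gamma)<\delta$, and partition $[0,1]$ into $n$ subintervals $t_i=i/n$ with $\w_i=\gamma(t_i)$; $n$ will be taken large. Inductively define $q_0 = \w_0$, and for $i=1,\ldots,n$ let $q_i$ be the closest-to-$q_{i-1}$ solution of $F(q) = F(\w_i)$. Existence and the estimate $\|q_i - q_{i-1}\| \leq (1/\sqrt{l})\cdot(1+o(1))\,\|F(\w_i)-F(\w_{i-1})\|$ follow from the quantitative implicit function theorem applied at $q_{i-1}$, using $\sigma_{\min}(\J_{q_{i-1}})\geq\sqrt{l}$ together with the Lipschitz-Jacobian hypothesis to control the higher-order Newton error, which vanishes as $n\to\infty$. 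Setting $p_1 = \w_0$ and $p_{i+1} = q_i$ makes the required relations $[p_1]=[\w_0]$, $[p_{i+1}]=[q_i]$ and $[q_n]=[\w_1]$ all automatic.

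\textbf{Summation and obstacle.} The elementary bound $\|F(\w_i)-F(\w_{i-1})\| \leq \int_{t_{i-1}}^{t_i}\|\J_{\gamma(s)}\gamma'(s)\|\,ds$, combined with $\|\J_\w v\| \leq \|v\|_{f^*H}/\sqrt{\lambda_{\min}(\Hess)}$, gives $\sum_i\|q_{i-1}-q_i\| \leq C\,\textsc{len}_{f^*H}(\gamma)/\sqrt{l} < C\delta/\sqrt{l}$ for a constant $C$ depending only on the \ntk\ bounds and the Hessian of the log-likelihood; after absorbing constants this is well inside the stated $6\delta/l$. The main technical obstacle is not the length estimate itself but ensuring that each $q_i$ actually lies in the smooth-reparameterization class $[\w_i]$ rather than merely in the level set $\mathcal{R}^f_\mathcal{X}(\w_i)=F^{-1}(F(\w_i))$, which in principle could have several connected components. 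This is resolved by keeping each Newton correction small enough (choose $n$ large) that the straight Euclidean segment from $\w_i$ to $q_i$ stays inside the connected component of $\w_i$ in the level set, producing an explicit homotopy and thus $q_i \sim \w_i$; here the uniform \ntk{} lower bound (giving controlled thickness of the fibers) and the Lipschitz-Jacobian assumption (controlling how fast the fiber direction rotates, cf.\@ \cref{prop:control_on_metric_rotation}) are both essential.
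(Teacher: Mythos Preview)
Your overall strategy differs from the paper's. Both discretize a short pullback-metric curve $\gamma$, but the paper keeps the sequence on $\gamma$ itself, setting $q_i=\gamma(t_{i+1})$ and defining $p_i=\gamma(t_i)+P_i^K(\gamma(t_{i+1})-\gamma(t_i))$ via the orthogonal projection $P_i^K$ onto $\Ker{\ggn_{\gamma(t_i)}}$; the Euclidean cost $\|p_i-q_i\|$ is then exactly the image component of the displacement, bounded through the eigenvalue lower bound $l$ after invoking \cref{prop:control_on_metric_rotation} to compare the metrics at $\gamma(t_i)$ and $\gamma(t_{i+1})$. Your route instead tracks level sets of the stacked evaluation $F$ by successive quantitative-IFT corrections anchored at $q_{i-1}$, producing a sequence that sits exactly on the target fibers but is no longer tied to $\gamma$. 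The paper's version is more elementary (pure linear algebra at each node); yours makes the submersion structure explicit and secures exact fiber membership, which the paper's linear projection only achieves tangentially.

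There is, however, a genuine gap in your resolution of the connected-component obstacle. Enlarging $n$ shrinks each Newton correction $\|q_i-q_{i-1}\|$, but the quantity your homotopy claim needs small is $\|q_i-\w_i\|$, and this does \emph{not} go to zero with $n$: since $q_i$ is built from $q_{i-1}$ rather than from $\w_i$, the discrepancy accumulates along the sequence and is controlled only by the Euclidean length of $\gamma$ together with that of the $q$-chain, neither of which is governed by $\delta$. Hence ``the straight Euclidean segment from $\w_i$ to $q_i$'' need not be short; and even when both endpoints lie on the same curved fiber, a straight segment between them generically leaves that fiber and therefore cannot serve as the homotopy witnessing $q_i\sim\w_i$. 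A clean repair is to anchor each IFT step at the curve rather than at the previous $q$: set $q_i=\gamma(t_{i+1})$ and let $p_i$ be the IFT-produced point on $F^{-1}(F(\gamma(t_i)))$ nearest to $\gamma(t_{i+1})$. Then $\|p_i-\gamma(t_i)\|=O(1/n)$, the local IFT chart itself supplies a continuous path in the fiber from $\gamma(t_i)$ to $p_i$ certifying $[p_i]=[\gamma(t_i)]$, and your summation estimate carries over verbatim.
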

\begin{proof}
    $d_{f^*H}(\w_0,\w_1) < \delta$ implies that there exist a path $\gamma:[0,1]\rightarrow\R^D$ with 
    \begin{equation}
        2\delta
        =
        \textsc{len}_{f^*H}(\gamma)
        =
        \int_0^1 \|\gamma'(t)\|_{f^*H_{\gamma(t)}} \textnormal{d} t.
    \end{equation}
    Consider also the Euclidean length $\textsc{len}_E(\gamma)$ which does \emph{not} depend on $\delta$. Then, for any $\delta$ there exist $n(\delta)\in\mathbb{N}$ such that, considering the uniform partition of $[0,1]$, $t_i = \nicefrac{i}{n(\delta)}$ for $i=0,\ldots,n(\delta)$ it holds the discrete approximation of the integral
    \begin{align}
        3\delta 
        >
        \sum_{i=0}^{n(\delta)-1}
        \|\gamma(t_{i+1}) - \gamma(t_i)\|_{f^*H_{\gamma(t_i)}}
        & =
        \sum_{i=0}^{n(\delta)-1}
        \|\gamma(t_{i+1}) - p_i + p_i - \gamma(t_i)\|_{f^*H_{\gamma(t_i)}} \\
        &=
        \sum_{i=0}^{n(\delta)-1}
       \|\gamma(t_{i+1}) - p_i\|_{f^*H_{\gamma(t_i)}}
        + 
        \underbrace{\|p_i - \gamma(t_i)\|_{f^*H_{\gamma(t_i)}}}_{=0},
    \end{align}
    where $p_i$ is defined for every $i\in\{0,\ldots,n(\delta)-1\}$ as follow. Consider the projection $P^K_i$ on the kernel of $f^*H_{\gamma(t_i)}$, and the projection $P^I_i$ on the image of $f^*H_{\gamma(t_i)}$, such that the two projections are orthogonal and $P_i^K+P_i^I=\identity$. Define the point $p_i\in\R^D$ as $p_i=P_i^K(\gamma(t_{i+1}) - \gamma(t_i)) + \gamma(t_i)$, which implies that $\|p_i - \gamma(t_i)\|_{f^*H_{\gamma(t_i)}}=0$. By definition of the projections, it also hold that $p_i=\gamma(t_{i+1})-P_i^I(\gamma(t_{i+1}) - \gamma(t_i))$, which implies that $\gamma(t_{i+1}) - p_i = P_i^I(\gamma(t_{i+1}) - \gamma(t_i))$ is aligned with the non-zero eigenvalues of $f^*H_{\gamma(t_i)}$, this means that we can resort to the lower bound $l$ on the non-zero eigenvalues and \cref{prop:control_on_metric_rotation} to see that
    \begin{align}
        \|\gamma(t_{i+1}) - p_i\|_{f^*H_{\gamma(t_i)}} 
        &\geq
        (\underbrace{1 - K\|\gamma(t_{i+1})-\gamma(t_{i})\|}_{\tilde{K}})
        \|\gamma(t_{i+1}) - p_i\|_{f^*H_{\gamma(t_{i+1})}} \\
        &\geq
        \Tilde{K} l \|\gamma(t_{i+1}) - p_i\| \\
        &\geq
        \frac{l}{2} \|\gamma(t_{i+1}) - p_i\|
        \qquad\forall i\in\{0,\ldots,n(\delta)-1\},
    \end{align}
    where, recalling that $\|\gamma(t_{i+1})-\gamma(t_{i})\|=\nicefrac{1}{n(\delta)}$ by construction, there always exist an $n(\delta)$ big enough such that $\Tilde{K}\geq \nicefrac{1}{2}$, which we can assume without loss of generality. 
    Rearranging the terms, we have
    \begin{equation}
        \frac{6\delta}{l}
        \geq
        \sum_{i=0}^{n(\delta)-1}
        \|\gamma(t_{i+1}) - p_i\|.
    \end{equation}
    Finally, we can define the points $q_i=\gamma(t_{i+1})$ and we have the two sequences $p_0,\ldots,p_{n(\delta)-1}$ and $q_0,\ldots,q_{n(\delta)-1}$ whose satisfies the costrains $[\w_0]=[p_0]$, $[p_{i+1}]=[q_i]$ and $[q_{n(\delta)-1}]=[\w_1]$. We highlight that the dependence on $\delta$ of the length of the sequence is not problematic, as it is sufficient that $n(\delta)$ is finite for every fixed $\delta$, which it is. Thus,
    \begin{equation}
        d_{\mathcal{P}}([\w_0,\w_1])
        \leq \sum_{i=0}^{n(\delta)-1} \|p_i - q_i\|
        < 
        \frac{6\delta}{l},
    \end{equation}
    which concludes the proof.
\end{proof}

\section{Proof of existence of the two Riemannian manifolds}\label{sec: C}

This section contains the proof for the existence of the two Riemannian manifolds $(\mathcal{P}_{\bar{\w}},\m)$ and $(\mathcal{P}_{\bar{\w}}^\perp,\m^{\perp})$ embedded in $\mathbb{R}^D$. In the rest of the section, we consider a fixed $\bar{\w}\in\R^D$.

Fix a training set $\mathcal{X}=\{\x_1,\ldots,\x_N\}\subset\R^I$ of size $N$ and consider the stacked partial evaluation of the network defined as
\begin{align}
    \mathfrak{F}: 
    \mathbb{R}^D 
        & \longrightarrow \mathbb{R}^{NO} \\
    \w
        & \longmapsto 
        \left(f(\w, \x_1), \ldots, f(\w,\x_N)\right).
\end{align}
And define $\bar{\y}=\mathfrak{F}(\bar{\w})\in\R^{NO}$. The differential $\nabla_{\w}\mathfrak{F}\big|_{\bar{\w}} = \J_{\bar{\w}} \in\R^{NO\times D}$ equals the stacking of the per-datum Jacobians, and we assume it to be full rank thanks to the uniform lower bound on the eigenvalues of the \ntk\, matrix. Full-rankness in the overparametrized setting $D>NO$ implies the surjectivity of the differential operator. 
\begin{equation}
     \J_{\bar{\w}} 
     =
     \left(
        \frac{\partial \mathfrak{F}_i}{\partial \w_j}\bigg|_{\bar{\w}}
     \right)_{i=1,\ldots,NO;\, j=1,\ldots,D}.
\end{equation}
We reorder the $\w_i$ so that the first $NO$ columns are independent. Then the $NO\times NO$ matrix
\begin{equation}
     R
     =
     \left(
        \frac{\partial \mathfrak{F}_i}{\partial \w_j}\bigg|_{\bar{\w}}
     \right)_{i=1,\ldots,NO;\, j=1,\ldots,NO}
\end{equation}
is non-singular. We consider the map
\begin{equation}
    \alpha(\w_1, \ldots, \w_D)
    =
    \left( 
        \mathfrak{F}(\w)_1,
        \ldots,
        \mathfrak{F}(\w)_{NO},
        \w_{NO+1},
        \ldots,
        \w_D
    \right).
\end{equation}
We obtain
\begin{equation}
    \nabla_\w \alpha\big|_{\bar{\w}}
    =
    \left(
        \frac{\partial \alpha_i}{\partial \w_j}\bigg|_{\bar{\w}}
    \right)_{i=1,\ldots,D;\, j=1,\ldots,D}
    =
    \begin{pmatrix}
        R & * \\
        0 & \mathbb{I}
    \end{pmatrix},
\end{equation}
and this is non-singular. By the inverse function theorem, $\alpha$ is a local diffeomorphism. So there is an open $W\subseteq\R^D$ containing $\bar{w}$ such that $\alpha|_W: W\rightarrow\alpha(W)$ is smooth with smooth inverse.

Finally, define 
\begin{equation}
    \mathcal{P}_{\bar{\w}}^\perp = \left\{ \alpha^{-1}(\underbrace{\bar{\y}_1,\ldots,\bar{\y}_{NO}}_{NO}, p_{1}, \ldots, p_{D-NO}) \quad \textnormal{ for } p\in\R^{D-NO} \right\}\subseteq\R^D,
\end{equation}
and similarly 
\begin{equation}
    \mathcal{P}_{\bar{\w}} = \left\{ \alpha^{-1}(p_{1}, \ldots, p_{NO}, \underbrace{0,\ldots,0}_{D-NO}) \quad \textnormal{ for } p\in\R^{NO} \right\}\subseteq\R^D.
\end{equation}

We claim that the two restrictions of $\alpha$ are slice charts of $\mathcal{P}_{\bar{\w}}^\perp$ and $\mathcal{P}_{\bar{\w}}$, respectively. Since it is a smooth diffeomorphism, it is certainly a chart. Moreover, by construction, the points in $\mathcal{P}_{\bar{\w}}^\perp$ are exaclty those whose image under $\mathfrak{F}$ is $\bar{y}$, thus, $\mathcal{P}_{\bar{\w}}^\perp\subseteq[\bar{\w}]$.
On the other hand, the points $\w\in\mathcal{P}_{\bar{\w}}$ parametrize functions that take the values $\mathfrak{F}(\w)=p$ in a local neighbourhood of $\bar{y}$. Thus, locally it never intersects the same equivalence class more than one time.

\section{Proof of Theorem \ref{thm:5.1}}
By definition of the kernel manifold, we have that if
    $\w \in \mathcal{P}_{\w'}^\perp$ then we have that $\w \in [\w']$. Hence for all $\w \in \mathcal{P}_{\w'}^\perp$ and for all $\x \in \mathcal{X}$
\begin{equation}
    f(\w, \x) = f(\w', \x).
\end{equation}
It follows that $\mathrm{Var}_{\w\sim \mathcal{P}_{\w'}^\perp}\left[ f(\w,\x)\right] = 0$ for any $\x\in\mathcal{X}$. 

For the second statement notice that $\mathrm{Var}_{\w\sim\mathcal{P}_{\w'}^\perp}\left[ f(\w,\x_{test} \right] = 0 $ if and only if $f(\w,\x_{test}) = \bm{c}$ for all $\w \sim \mathcal{P}_{\w'}^\perp$ and for some constant $\bm{c}$. 

Suppose $\hat{\w}\in\bar{\mathcal{R}}^f_{\mathcal{X}}$ and $\hat{\w}\not\in\bar{\mathcal{R}}^f_{\mathcal{X}\cup\{\x_{test}\}}$, then $f(\hat{\w}, \x_{test}) \neq f(\w', \x_{test})$, which means that $ f(\w,\x_{test})$ is not constant. Hence we have that $\mathrm{Var}_{\w\sim\mathcal{P}_{\w'}^\perp}\left[ f(\w,\x_{test} \right] > 0 $

\section{Further results and experimental setup}

\subsection{Implementation details of the Laplace approximation}\label{sec: numerics}
Sampling from Laplace's approximation requires computing the inverse square root of a matrix of size $D \times D$, where $D$ is the number of parameters. For most models this problem is intractable. The standard approach to this problem is to consider sparse approximations to the Hessian of the loss function such as KFAC, Last-Layer, and Diagonal approximations. However, these approximations introduce additional complexity making the task of validating our theoretical analysis much harder. In light of these considerations, we choose to sample from Laplace's approximation in a way that is closest to the theoretical ideal, at the cost of performing expensive computations.

\begin{wrapfigure}[14]{r}{0.4\textwidth}  \includegraphics[width=\linewidth]{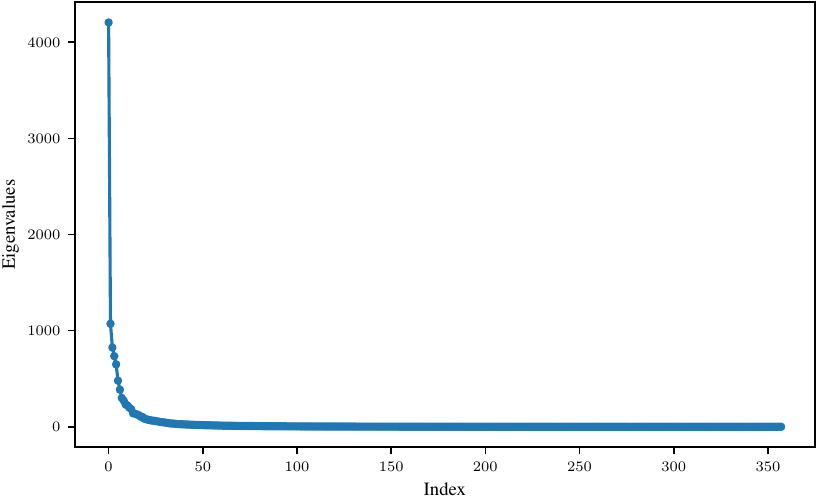}
  \caption{Eigenvalues of the $\ggn$ of a Convolutional Neural Network trained on MNIST.}
  \label{fig:eigvals}
\end{wrapfigure}

In small experiments with the toy regression problem, we instantiate the exact $\ggn$ and compute vector products with its inverse-square root. For experiments with LeNet and ResNet, we rely on the empirical observation that the spectrum of the $\ggn$ is dominated by its leading eigenvalues (Figure~\ref{fig:eigvals})

This makes low-rank approximations of the $\ggn$ particularly attractive. We choose the Lanczos algorithm \citep{lanczos:hal-01712947} with full reorthogonalization and run it for a very high number of iterations, to ensure numerical stability and very low reconstruction error to form our low-rank approximations of the $\ggn$. Additionally, Lanczos only requires implicit access to the matrix so we avoid the memory cost and $\ggn$ vector products for neural networks can be performed efficiently using Jacobian-Vector products and Vector-Jacobian products. If we do a sufficiently large number of iterations we obtain the non-zero eigenvalues $\tilde{\mat{\Lambda}}$, and corresponding eigenvectors $\mat{U_1}$.
For obtaining samples from the diffusion we can use Algorithm~\ref{alg:training_alg} with the eigenvalues and eigenvectors computed using the Lanczos algorithm.
 
 \begin{algorithm}[t]
    \centering
    \caption{Laplace diffusion}
    \label{alg:training_alg}
    \begin{algorithmic}[1]
        \STATE {\textbf{Input:}} Observed data $\mathcal{D}$, trained \textsc{map} point $\w'$, number of steps $\bm{T}$, number of samples $\bm{S}$, rank $k$.
        \STATE Initialize samples $\w^0_1 \ldots \w^0_{\bm{S}}$ as the \textsc{map} estimate $\w'$
        \FOR{$j$ \textbf{in} $1, \ldots \bm{S}$}
        \FOR{$t$ \textbf{in} $1, \ldots \bm{T}$}
        \STATE {Sample $\epsilon \sim \mathcal{N}(0,\mathbf{I}_{k})$}
        \STATE {Compute the top-$k$ eigenvalues($\Lambda^t_j$) and eigenvectors($U^t_j$) of $\ggn_{\w^t_{j}}$}
        \STATE {$\w^t_{j} \leftarrow \w^t_{j} + \frac{1}{\sqrt{\bm{N}}} U^t_j (\Lambda^t_j + \alpha\mat{I})^{-\frac{1}{2}} \epsilon$}
        \ENDFOR
        \ENDFOR
        \STATE Return posterior samples $\w^{\bm{T}}_1 \ldots \w^{\bm{T}}_{\bm{S}}$.
    \end{algorithmic}
\end{algorithm}

 Given the non-zero eigenvalues $\tilde{\mat{\Lambda}}$, and corresponding eigenvectors $\mat{U_1}$ we can also form inverse vector products with the square root of $\ggn + \alpha I$. It should be evident from the discussion in section~\ref{reparam} about decomposing the covariance that this vector product with a vector $v$ is given by:
\begin{align}
    (\ggn + \alpha I)^{-\frac{1}{2}} v &= \mat{U_1} (\tilde{\mat{\Lambda}} + \alpha \mat{I}_k)^{-\frac{1}{2}} v + \frac{1}{\sqrt{\alpha}}\mat{U_2} v \\
            &= \mat{U_1} (\tilde{\mat{\Lambda}} + \alpha \mat{I}_k)^{-\frac{1}{2}} v + \frac{1}{\sqrt{\alpha}}(\mat{I} - \mat{U_1}) v \\
            &= \mat{U_1} ((\tilde{\mat{\Lambda}} + \alpha \mat{I}_k)^{-\frac{1}{2}} - \frac{1}{\sqrt{\alpha}} \mat{I}_k)v + \frac{1}{\sqrt{\alpha}} v
\end{align}

This allows us to form inverse-square root vector products with the $\ggn  + \alpha I$ given the non-zero eigenspectrum. We run the sampling algorithm on H100 GPUs to run the high-order Lanczos decomposition. This approach of sampling from Laplace's approximation has $O(pk^2) $ time complexity, and $O(pk)$ memory cost, where $k$ is the number of Lanczos iterations and  $p$ is the number of parameters.

\subsection{Experimental details and further results for toy experiments}\label{sec: toy_results}

\subsubsection{Toy regression in Figure~\ref{fig:laplace_fails}}
In this experiment, we fit a small MLP, with 2 hidden layers of width 10  on the sine curve. Due to the small size of the $\ggn$ it is possible to instantiate and do all the computations explicitly. We sample from the exact Laplace's approximation, the non-kernel and kernel subspace of the $\ggn$, and use the neural network and the linearized predictive functions for the top row and the bottom row respectively. For the middle, we simulate a diffusion on the kernel manifold for the center plot, a diffusion in the non-kernel manifold for the right plot and we do alternating steps in the two manifolds for the left plot which gives us the full distribution.

\begin{figure}[t]
  \includegraphics[width=\linewidth]{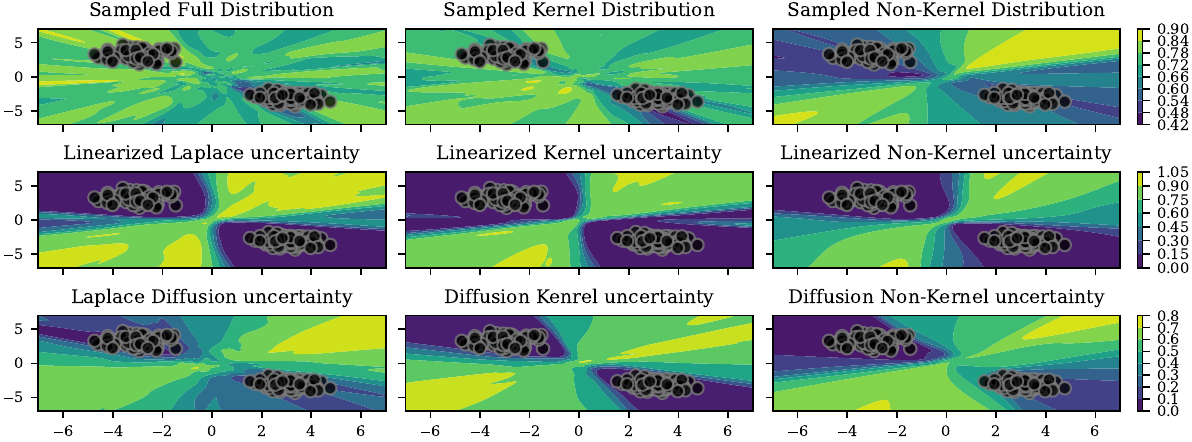}
  \vspace{-7mm}
  \caption{Decomposition of uncertainties of Laplace Approximation for the Gaussian mixture classification.}
  \label{fig:laplace_fails_classification}
\end{figure}

We also do a similar experiment to show that the same phenomenon also holds for classification. We use a small convolutional neural network, with 2 convolutional layers with kernel of size 3, to classify a 2-class mixture of Gaussians and look at uncertainties of sampled Laplace, linearized Laplace, and Laplace's diffusion. We decompose these uncertainties into their kernel and non-kernel components respectively. We see the same effect for classification as we did in regression in Figure~\ref{fig:laplace_fails_classification}

The main takeaway of these experiments is that sampled Laplace underfits in-distribution and this effect is related to the kernel component of the distribution. 

\subsubsection{Effect of kernel rank on in-distribution fit in figure~\ref{fig:rank_plot}}

For this experiment, we train a small convolutional neural network, with two convolutional layers and a kernel of size 3, on MNIST. In this case, the $\ggn$ can be instantiated explicitly. We recall that the $\ggn$ is a sum of $\J_{\w'}(\x_n)\T \Hess(\x_n) \J_{\w'}(\x_n)$ over the dataset where $\x_n$ are the individual data points. We recall that the rank of the $\ggn$ is bounded by $NO$ where $N$ is the number of data points and $O$ is the output dimensions. This suggests considering partial sums by subsampling the data points gives us a $\ggn$ with a lower rank. Equivalently, this is $\ggn$ with a higher dimensional kernel, and hence the usual covariance from Laplace's approximation $(\ggn + \alpha I)^{-1}$ has a higher contribution from the kernel subspace.

We consider multiple such subsamples and plot the training accuracy for samples from Laplace's approximation against the kernel subspace dimension. Here we see a clear trend that the underfitting in sampled Laplace decreases as the rank of $\ggn$ increases, or the contribution from the kernel component decreases. This serves to further support our suggested hypothesis that the underfitting in sampled Laplace is caused by its kernel component and is hence deeply related to reparameterizations.

\subsection{Additional benchmarks and results for image classification}\label{sec: more baselines}

\paragraph{Training details:} 
We use a standard LeNet for the MNIST and FashionMNIST experiments and a smaller version of ResNet \citep{lippe2022uvadlc}, with 272,378 parameters consisting of 3 times a group of 3 ResNet blocks. We use those instead of the standard ResNets due to constraints on the computational budget for the CIFAR-10 experiments.  We train LeNet with Adam optimizer and a learning rate of $10^{-3}$. For the ReNet we use SGD with a learning rate of $0.1$ with momentum and weight decay.

\paragraph{Hyperparameters:} We benchmark Laplace diffusion against SWAG, diagonal Laplace, last-layer Laplace, and MAP in addition to linearised Laplace and sampled Laplace. 

For choosing the prior precision for diagonal Laplace and last-Layer Laplace for each benchmark we do a grid search over the set $\{0.1, 1.0, 5.0, 10.0, 50.0, 100.0 \}$. For Laplace diffusion, sampled Laplace, and Linearised to ensure that the comparison can validate the theory it is preferable to have the same prior precision for all of these methods. So we only do the grid search to tune the prior precision for sampled Laplace and use this for all three methods. We keep the hyperparameters for these three methods as similar as possible to have the most informative comparisons. 

For Laplace diffusion on MNIST and FMNIST, we simulate the diffusion with a step size of $0.05$, with $2000$ Lanczos iterations and we predict using $20$ MC samples. For sampled Laplace and Linearised Laplace, we also use $2000$ Lanczos iterations and we predict using $20$ MC samples. For the CIFAR-10 experiments, we simulate the diffusion with a step size of $0.2$, with $5000$ Lanczos iterations and we predict using $20$ MC samples. For sampled Laplace andlLinearised Laplace use the same number of Lanczos iterations and MC samples. 

For SWAG we use a learning rate of $10^{-2}$ with momentum of $0.9$ and weight decay of $3e^{-4}$ and the low-rank covariance structure in all experiments. For the MNIST and FMNIST experiments we collect 20 models and for the CIFAR-10 experiments we collect 3 models to sample from the posterior.

Last-layer Laplace is the recommended method by \citet{daxberger2021bayesian} so it should approximate the best performance one can get using various possible configurations. For the CIFAR-10 experiments, the last layer of ResNet is too large to instantiate the full $\ggn$ matrix. So we instead use the last 1000 parameters of the model to construct the covariance matrix of the posterior.

Diagonal Laplace requires high prior precision to ensure it does not severely underfit in-distribution (similar to sampled Laplace). It often becomes almost deterministic. So we exclude it from the CIFAR results. This has also been observed by \citet{deng2022accelerated} and \citet{ritter2018scalable}. 

All additional information about the experimental setup can be found in the submitted code.

\subsubsection{In-distribution fit and calibration}
We extend Table~\ref{tab:id} to benchmark Laplace's diffusion against various other Bayesian methods. Here we see that despite using the neural network predictive it is competitive with the best-performing Bayesian methods whereas Sampled Laplace performs significantly worse.

\paragraph{MNIST}

  \setlength{\aboverulesep}{0pt}
  \setlength{\belowrulesep}{0pt}
  \setlength{\extrarowheight}{.75ex}
\resizebox{\textwidth}{!}{\begin{tabular}{lllllll}
\toprule
\rowcolor{gray!30}
                            & Conf.~($\uparrow$)           & NLL~($\downarrow$)          & Acc.~($\uparrow$)           & Brier~($\downarrow$)        & ECE~($\downarrow$)        & MCE~($\downarrow$) \\
\midrule
Laplace's diffusion & 0.988±0.001 & 0.040±0.007 & 0.986±0.002 & 0.022±0.003 & 0.146±0.011 & 0.773±0.050 \\
Sampled Laplace & 0.593±0.002 & 3.669±0.157 & 0.157±0.031 & 1.162±0.044 & 0.436±0.027 & 0.984±0.001 \\
Linearised Laplace & 0.967±0.002 & 0.295±0.041 & 0.930±0.004 & 0.111±0.006 & 0.239±0.032 & 0.862±0.044 \\
SWAG & 0.993±0.001 & 0.032±0.001 & 0.989±0.002 & 0.019±0.001 & 0.187±0.024 & 0.901±0.041 \\
Last-Layer Laplace & 0.991±0.001 & 0.047±0.003 & 0.987±0.002 & 0.021±0.001 & 0.198±0.034 & 0.721±0.091 \\
Diagonal Laplace & 0.963±0.005 & 0.078±0.011 & 0.976±0.003 & 0.038±0.005 & 0.095±0.004 & 0.692±0.029 \\
MAP & 0.992±0.002 & 0.042±0.004 & 0.988±0.000 & 0.021±0.000 & 0.198±0.041 & 0.685±0.091 \\
\bottomrule
\end{tabular}}

\paragraph{FMNIST}

\resizebox{\textwidth}{!}{\begin{tabular}{lllllll}
\toprule
 \rowcolor{gray!30}
                            & Conf.~($\uparrow$)           & NLL~($\downarrow$)          & Acc.~($\uparrow$)           & Brier~($\downarrow$)        & ECE~($\downarrow$)        & MCE~($\downarrow$) \\
\midrule
Laplace's diffusion & 0.900±0.001 & 0.275±0.016 & 0.906±0.007 & 0.141±0.006 & 0.108±0.015 & 0.729±0.092 \\
Sampled Laplace & 0.618±0.021 & 4.507±0.160 & 0.098±0.010 & 1.295±0.014 & 0.518±0.013 & 0.986±0.001 \\
Linearised Laplace & 0.897±0.003 & 0.423±0.014 & 0.862±0.005 & 0.207±0.006 & 0.147±0.017 & 0.756±0.048 \\
SWAG & 0.925±0.002& 0.259±0.004 & 0.911±0.006 & 0.135±0.004 & 0.152±0.008 & 0.752±0.067 \\
Last-Layer Laplace & 0.914±0.001 & 0.280±0.016 & 0.901±0.006 & 0.144±0.004 & 0.131±0.003 & 0.673±0.083 \\
Diagonal Laplace & 0.862±0.009 & 0.323±0.022 & 0.889±0.010 & 0.165±0.011 & 0.102±0.003 & 0.660±0.051 \\
MAP & 0.914±0.000 & 0.279±0.015 & 0.904±0.006 & 0.143±0.004 & 0.125±0.010 & 0.609±0.033 \\
\bottomrule
\end{tabular}
}

\paragraph{CIFAR-10}

\resizebox{\textwidth}{!}{\begin{tabular}{lllllll}
\toprule
 \rowcolor{gray!30}
                            & Conf.~($\uparrow$)           & NLL~($\downarrow$)          & Acc.~($\uparrow$)           & Brier~($\downarrow$)        & ECE~($\downarrow$)        & MCE~($\downarrow$) \\
\midrule
Laplace's diffusion & 0.948±0.004 & 0.403±0.007 & 0.889±0.005 & 0.180±0.009 & 0.264±0.048 & 0.879±0.038 \\
Sampled Laplace & 0.843±0.004 & 0.997±0.222 & 0.717±0.049 & 0.422±0.081 & 0.221±0.047 & 0.804±0.080 \\
Linearised Laplace & 0.951±0.007 & 0.614±0.020 & 0.863±0.001 & 0.222±0.002 & 0.337±0.022 & 0.789±0.035 \\
SWAG & 0.942±0.003 & 0.393±0.004 & 0.884±0.002 & 0.176±0.001 & 0.234±0.014 & 0.912±0.016 \\
Last-Layer Laplace & 0.953±0.004 & 0.343±0.033 & 0.899±0.001 & 0.155±0.007 & 0.260±0.000 & 0.884±0.029 \\
MAP & 0.960±0.003 & 0.333±0.030 & 0.913±0.007 & 0.143±0.009 & 0.282±0.002 & 0.932±0.006 \\
\bottomrule
\end{tabular}
}

\subsubsection{Robustness to dataset shift}\label{sec:robustness}

\begin{figure}[h!]

\centering
\includegraphics[width=\textwidth]{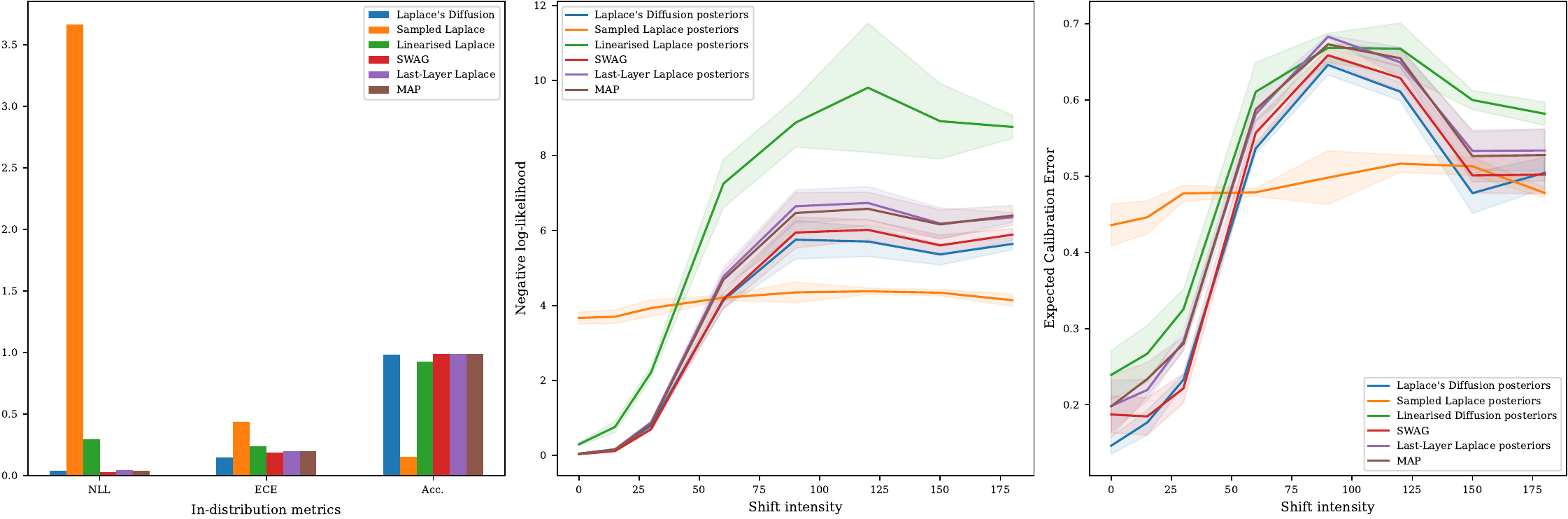}\hfill
\includegraphics[width=\textwidth]{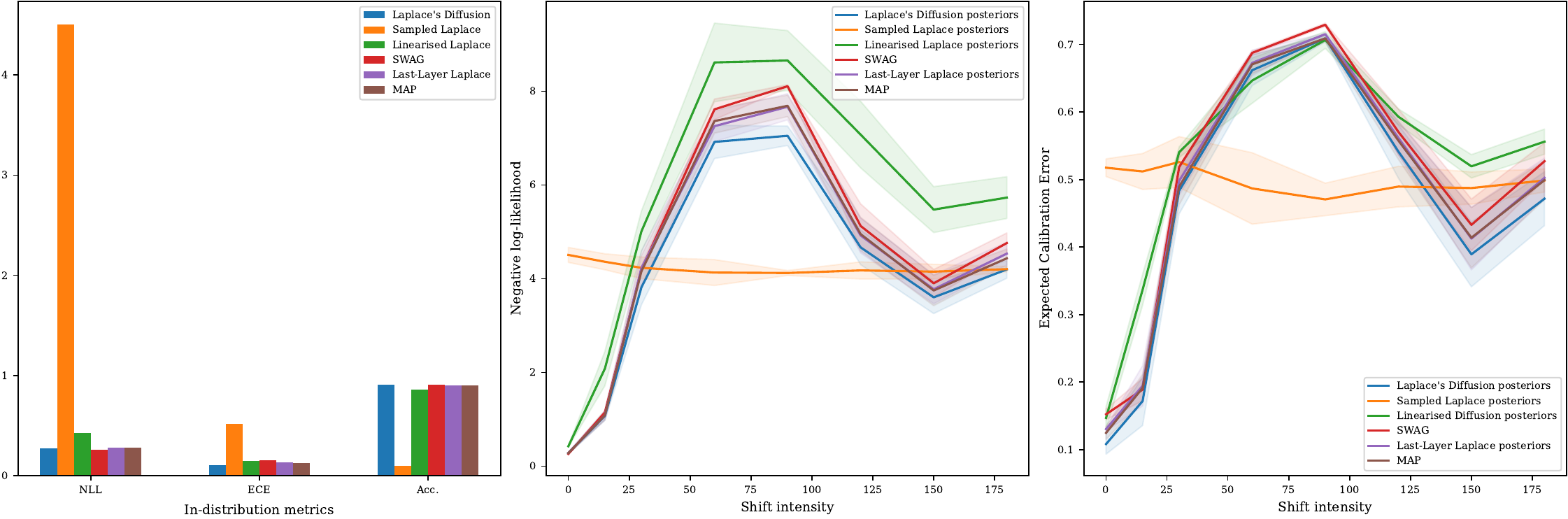}\hfill
\includegraphics[width=\textwidth]{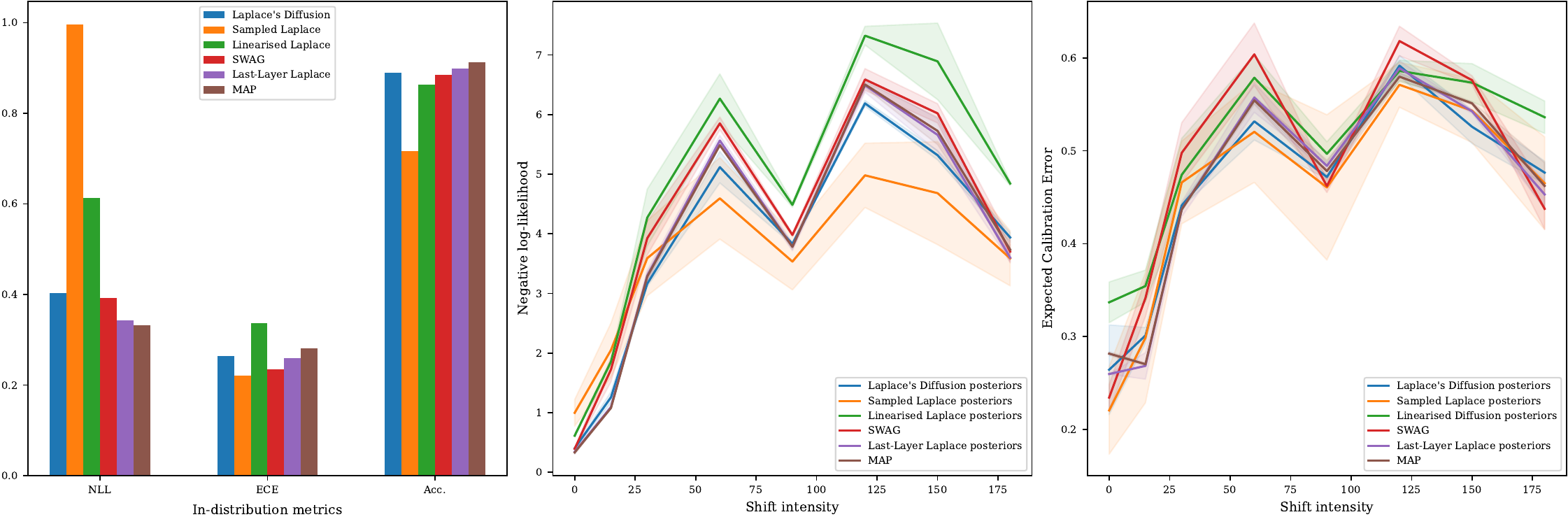}

\caption{Model Fit and Calibration of various posterior sampling methods on in-distribution data(first column) and under distribution shift for MNIST(top row), Fashion MNIST(middle row) and CIFAR-10(bottom row). We use rotated MNIST, rotated FMNIST, and rotated CIFAR in the second and third columns. SHift intensities denote angles of rotation.}
\label{fig: baseline  rotations}

\end{figure}

In these experiments (Fig.~\ref{fig: baseline  rotations}), to measure in-distribution fit and calibration, we report accuracy, negative log-likelihood (NLL), and expected calibration error (ECE)—all evaluated on the standard test sets. We measure the robustness of dataset shift of various baselines by plotting the negative log-likelihood and the expected calibration error against shift intensity. The desired behavior is good in-distribution fit, as close as possible to MAP, and stable calibration errors and NLL under distribution shifts. We see that the Laplace's diffusion is competitive against other Bayesian methods.






\subsubsection{Out-of-distribution detection}
We extend ~\ref{tab:ood} to benchmark Laplace's diffusion against various other Bayesian methods for Out-of-Distribution Detection. Once again, we observe that, despite using the neural network predictive, it is competitive with the best-performing Bayesian methods in terms of having a higher AUROC, whereas Sampled Laplace performs significantly worse. 

\paragraph{MNIST}

\resizebox{\textwidth}{!}{
\begin{tabular}{lllllll}
\toprule
\rowcolor{gray!30}
 Tested on & \multicolumn{2}{r}{FMNIST} & \multicolumn{2}{r}{EMNIST} & \multicolumn{2}{r}{KMNIST} \\
 \rowcolor{gray!30}
 & Conf.($\downarrow$) & AUROC($\uparrow$) & Conf.($\downarrow$) & AUROC($\uparrow$) & Conf.($\downarrow$) & AUROC($\uparrow$) \\
\midrule
Laplace's diffusion & 0.810±0.031 & 0.911±0.037 & 0.947±0.007 & 0.629±0.021 & 0.817±0.015 & 0.930±0.009 \\
Sampled Laplace & 0.583±0.015 & 0.515±0.021 & 0.584±0.003 & 0.515±0.004 & 0.588±0.013 & 0.507±0.015 \\
Linearised Laplace & 0.895±0.027 & 0.715±0.086 & 0.934±0.004 & 0.536±0.028 & 0.863±0.009 & 0.757±0.018 \\
SWAG & 0.827±0.054 & 0.949±0.018 & 0.955±0.004 & 0.627±0.016 & 0.823±0.013 & 0.947±0.007 \\
Last-Layer Laplace & 0.842±0.046 & 0.904±0.033 & 0.958±0.005 & 0.625±0.028 & 0.837±0.009 & 0.935±0.004 \\
Diagonal Laplace & 0.770±0.026 & 0.866±0.030 & 0.907±0.005 & 0.617±0.009 & 0.747±0.013 & 0.897±0.005 \\
MAP & 0.850±0.035 & 0.907±0.038 & 0.959±0.005 & 0.634±0.019 & 0.837±0.010 & 0.938±0.006 \\
\bottomrule
\end{tabular}}

\newpage

\paragraph{FMNIST}

\resizebox{\textwidth}{!}{\begin{tabular}{lllllll}
\toprule
\rowcolor{gray!30}
 Tested on & \multicolumn{2}{r}{MNIST} & \multicolumn{2}{r}{EMNIST} & \multicolumn{2}{r}{KMNIST} \\
 \rowcolor{gray!30}
 & Conf.($\downarrow$) & AUROC($\uparrow$) & Conf.($\downarrow$) & AUROC($\uparrow$) & Conf.($\downarrow$) & AUROC($\uparrow$)  \\
\midrule
Laplace's diffusion & 0.734±0.039 & 0.759±0.045 & 0.730±0.020 & 0.741±0.010 & 0.746±0.012 & 0.749±0.023 \\
Sampled Laplace & 0.597±0.027 & 0.495±0.037 & 0.593±0.026 & 0.503±0.036 & 0.598±0.024 & 0.493±0.033 \\
Linearised Laplace & 0.817±0.022 & 0.625±0.050 & 0.813±0.003 & 0.628±0.013 & 0.816±0.014 & 0.624±0.020 \\
SWAG & 0.727±0.013 & 0.817±0.015 & 0.763±0.028 & 0.769±0.026 & 0.777±0.007 & 0.782±0.010 \\
Last-Layer Laplace & 0.757±0.019 & 0.761±0.031 & 0.760±0.017 & 0.735±0.018 & 0.772±0.008 & 0.747±0.021 \\
Diagonal Laplace & 0.652±0.033 & 0.767±0.032 & 0.682±0.019 & 0.719±0.033 & 0.696±0.025 & 0.728±0.032 \\
MAP & 0.759±0.019 & 0.757±0.032 & 0.762±0.018 & 0.730±0.015 & 0.773±0.010 & 0.743±0.024 \\
\bottomrule
\end{tabular}}

\paragraph{CIFAR-10}

\begin{tabular}{lllll}
\toprule
\rowcolor{gray!30}
 Tested on & \multicolumn{2}{c}{CIFAR-100} & \multicolumn{2}{c}{SVHN} \\
 \rowcolor{gray!30}
 & Conf.($\downarrow$) & AUROC($\uparrow$) & Conf.($\downarrow$) & AUROC($\uparrow$) \\
\midrule
Laplace's diffusion & 0.790±0.002 & 0.851±0.002 & 0.764±0.008 & 0.862±0.010 \\
Sampled Laplace & 0.727±0.026 & 0.687±0.033 & 0.792±0.022 & 0.599±0.038 \\
Linearised Laplace & 0.818±0.005 & 0.837±0.006 & 0.809±0.033 & 0.854±0.024 \\
SWAG & 0.776±0.009 & 0.845±0.008 & 0.729±0.009 & 0.876±0.003 \\
Last-Layer Laplace & 0.789±0.006 & 0.864±0.001 & 0.786±0.029 & 0.868±0.015 \\
MAP & 0.797±0.004 & 0.873±0.001 & 0.792±0.034 & 0.878±0.017 \\
\bottomrule
\end{tabular}

\subsection{Short discussion on benchmarks}

\paragraph{Sparse Approximations of $\ggn$.}
Our theoretical analysis is mainly concerned with the ideal versions of Laplace's approximations, where we consider the full $\ggn$ in the covariance without any approximations. However, it can also shed some light on other Bayesian methods. 

It is common to use sparse approximations of the $\ggn$ when doing Laplace's approximations. Interestingly we observe that Laplace's approximations with sparse $\ggn$ such as diagonal Laplace, Last Layer, etc do not benefit from linearization to the same degree (Fig.~\ref{fig:diag_vs_full}).

\begin{figure}[H]
  \centering
  \includegraphics[width=0.7\linewidth]{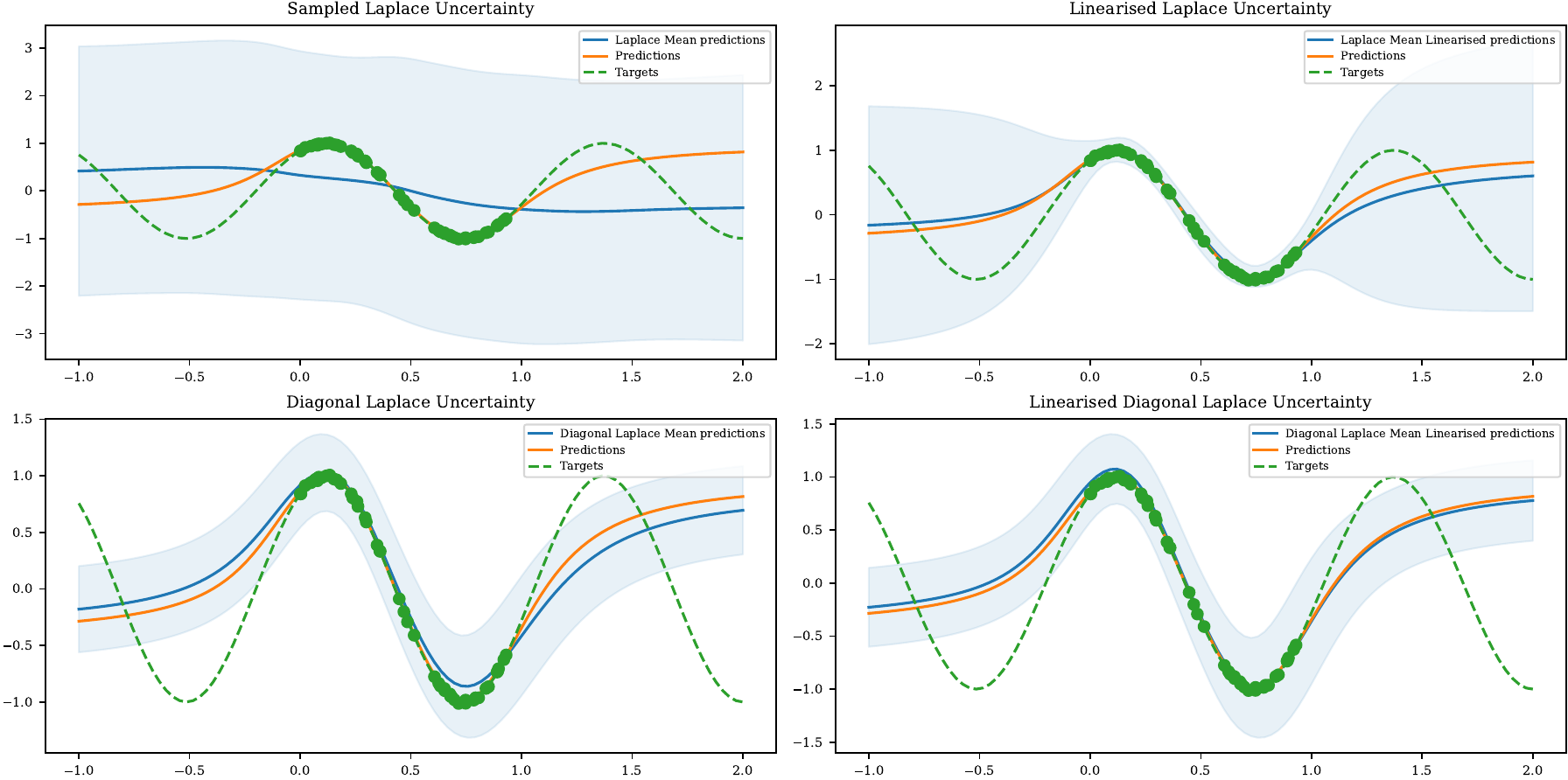}
  \caption{Predictive uncertainty of Laplace's approximation with neural network and linearized predictive (top row) and diagonal Laplace with neural network and linearized predictive (bottom row).}
  \label{fig:diag_vs_full}
\end{figure}

This is perfectly consistent with our analysis as we show that the benefit of linearization primarily comes from the Jacobian term in the linearized predictive and the $\ggn$ sharing a kernel. Diagonal and other sparse approximations do not share the spectral properties with the Jacobian in the linearized predictive. \citet{george2018fast} note the potential advantages of having the spectrum of the approximate curvature more aligned with the true curvature. This suggests future directions to improve various approximations to the $\ggn$ by accounting for reparameterizations.

\paragraph{SWAG.}
Another baseline that can be explained using our method is the SWAG. It has been shown that in \citep{li2021happens} SGD steps close to the optimum can be decomposed into a normal space component and a tangent space component. In our terminology, this can be thought of as a diffusion step in the Kernel manifold and a diffusion step in the Non-kernel manifold. Hence it can be shown that SWAG roughly approximates a diffusion-based posterior. Hence our analysis can provide some theoretical grounding for heuristic methods like SWAG.

\newpage
\section*{NeurIPS Paper Checklist}

\begin{enumerate}

\item {\bf Claims}
    \item[] Question: Do the main claims made in the abstract and introduction accurately reflect the paper's contributions and scope?
    \item[] Answer: \answerYes{} 
    \item[] Justification: The theoretical findings are states in abstract and introduction as clear as possible withouth first introducing all the notation of the paper. The experimental results match the claim and confirm the theoretical findings.
    \item[] Guidelines:
    \begin{itemize}
        \item The answer NA means that the abstract and introduction do not include the claims made in the paper.
        \item The abstract and/or introduction should clearly state the claims made, including the contributions made in the paper and important assumptions and limitations. A No or NA answer to this question will not be perceived well by the reviewers. 
        \item The claims made should match theoretical and experimental results, and reflect how much the results can be expected to generalize to other settings. 
        \item It is fine to include aspirational goals as motivation as long as it is clear that these goals are not attained by the paper. 
    \end{itemize}

\item {\bf Limitations}
    \item[] Question: Does the paper discuss the limitations of the work performed by the authors?
    \item[] Answer: \answerYes{} 
    \item[] Justification: The conclusion raises several points of concern.
    \item[] Guidelines:
    \begin{itemize}
        \item The answer NA means that the paper has no limitation while the answer No means that the paper has limitations, but those are not discussed in the paper. 
        \item The authors are encouraged to create a separate "Limitations" section in their paper.
        \item The paper should point out any strong assumptions and how robust the results are to violations of these assumptions (e.g., independence assumptions, noiseless settings, model well-specification, asymptotic approximations only holding locally). The authors should reflect on how these assumptions might be violated in practice and what the implications would be.
        \item The authors should reflect on the scope of the claims made, e.g., if the approach was only tested on a few datasets or with a few runs. In general, empirical results often depend on implicit assumptions, which should be articulated.
        \item The authors should reflect on the factors that influence the performance of the approach. For example, a facial recognition algorithm may perform poorly when image resolution is low or images are taken in low lighting. Or a speech-to-text system might not be used reliably to provide closed captions for online lectures because it fails to handle technical jargon.
        \item The authors should discuss the computational efficiency of the proposed algorithms and how they scale with dataset size.
        \item If applicable, the authors should discuss possible limitations of their approach to address problems of privacy and fairness.
        \item While the authors might fear that complete honesty about limitations might be used by reviewers as grounds for rejection, a worse outcome might be that reviewers discover limitations that aren't acknowledged in the paper. The authors should use their best judgment and recognize that individual actions in favor of transparency play an important role in developing norms that preserve the integrity of the community. Reviewers will be specifically instructed to not penalize honesty concerning limitations.
    \end{itemize}

\item {\bf Theory Assumptions and Proofs}
    \item[] Question: For each theoretical result, does the paper provide the full set of assumptions and a complete (and correct) proof?
    \item[] Answer: \answerYes{} 
    \item[] Justification: all theoretical statements are given a detailed derivation in the appendix.
    \item[] Guidelines:
    \begin{itemize}
        \item The answer NA means that the paper does not include theoretical results. 
        \item All the theorems, formulas, and proofs in the paper should be numbered and cross-referenced.
        \item All assumptions should be clearly stated or referenced in the statement of any theorems.
        \item The proofs can either appear in the main paper or the supplemental material, but if they appear in the supplemental material, the authors are encouraged to provide a short proof sketch to provide intuition. 
        \item Inversely, any informal proof provided in the core of the paper should be complemented by formal proofs provided in appendix or supplemental material.
        \item Theorems and Lemmas that the proof relies upon should be properly referenced. 
    \end{itemize}

    \item {\bf Experimental Result Reproducibility}
    \item[] Question: Does the paper fully disclose all the information needed to reproduce the main experimental results of the paper to the extent that it affects the main claims and/or conclusions of the paper (regardless of whether the code and data are provided or not)?
    \item[] Answer: \answerYes{} 
    \item[] Justification: We provide a detailed appendix describing the experimental setup. We will further release all code to reproduce the paper.
    \item[] Guidelines:
    \begin{itemize}
        \item The answer NA means that the paper does not include experiments.
        \item If the paper includes experiments, a No answer to this question will not be perceived well by the reviewers: Making the paper reproducible is important, regardless of whether the code and data are provided or not.
        \item If the contribution is a dataset and/or model, the authors should describe the steps taken to make their results reproducible or verifiable. 
        \item Depending on the contribution, reproducibility can be accomplished in various ways. For example, if the contribution is a novel architecture, describing the architecture fully might suffice, or if the contribution is a specific model and empirical evaluation, it may be necessary to either make it possible for others to replicate the model with the same dataset, or provide access to the model. In general. releasing code and data is often one good way to accomplish this, but reproducibility can also be provided via detailed instructions for how to replicate the results, access to a hosted model (e.g., in the case of a large language model), releasing of a model checkpoint, or other means that are appropriate to the research performed.
        \item While NeurIPS does not require releasing code, the conference does require all submissions to provide some reasonable avenue for reproducibility, which may depend on the nature of the contribution. For example
        \begin{enumerate}
            \item If the contribution is primarily a new algorithm, the paper should make it clear how to reproduce that algorithm.
            \item If the contribution is primarily a new model architecture, the paper should describe the architecture clearly and fully.
            \item If the contribution is a new model (e.g., a large language model), then there should either be a way to access this model for reproducing the results or a way to reproduce the model (e.g., with an open-source dataset or instructions for how to construct the dataset).
            \item We recognize that reproducibility may be tricky in some cases, in which case authors are welcome to describe the particular way they provide for reproducibility. In the case of closed-source models, it may be that access to the model is limited in some way (e.g., to registered users), but it should be possible for other researchers to have some path to reproducing or verifying the results.
        \end{enumerate}
    \end{itemize}

\item {\bf Open access to data and code}
    \item[] Question: Does the paper provide open access to the data and code, with sufficient instructions to faithfully reproduce the main experimental results, as described in supplemental material?
    \item[] Answer: \answerYes{} 
    \item[] Justification: We rely on established benchmark data. All code will be released under an open source software licence upon paper acceptance.
    \item[] Guidelines:
    \begin{itemize}
        \item The answer NA means that paper does not include experiments requiring code.
        \item Please see the NeurIPS code and data submission guidelines (\url{https://nips.cc/public/guides/CodeSubmissionPolicy}) for more details.
        \item While we encourage the release of code and data, we understand that this might not be possible, so “No” is an acceptable answer. Papers cannot be rejected simply for not including code, unless this is central to the contribution (e.g., for a new open-source benchmark).
        \item The instructions should contain the exact command and environment needed to run to reproduce the results. See the NeurIPS code and data submission guidelines (\url{https://nips.cc/public/guides/CodeSubmissionPolicy}) for more details.
        \item The authors should provide instructions on data access and preparation, including how to access the raw data, preprocessed data, intermediate data, and generated data, etc.
        \item The authors should provide scripts to reproduce all experimental results for the new proposed method and baselines. If only a subset of experiments are reproducible, they should state which ones are omitted from the script and why.
        \item At submission time, to preserve anonymity, the authors should release anonymized versions (if applicable).
        \item Providing as much information as possible in supplemental material (appended to the paper) is recommended, but including URLs to data and code is permitted.
    \end{itemize}

\item {\bf Experimental Setting/Details}
    \item[] Question: Does the paper specify all the training and test details (e.g., data splits, hyperparameters, how they were chosen, type of optimizer, etc.) necessary to understand the results?
    \item[] Answer: \answerYes{}
    \item[] Justification: 
    We specify the details about the experimental setup in the appendix. Furthermore, we also provide the code as supplemental material which contains the full details of the experiments.
    \item[] Guidelines:
    \begin{itemize}
        \item The answer NA means that the paper does not include experiments.
        \item The experimental setting should be presented in the core of the paper to a level of detail that is necessary to appreciate the results and make sense of them.
        \item The full details can be provided either with the code, in appendix, or as supplemental material.
    \end{itemize}

\item {\bf Experiment Statistical Significance}
    \item[] Question: Does the paper report error bars suitably and correctly defined or other appropriate information about the statistical significance of the experiments?
    \item[] Answer: \answerYes{} 
    \item[] Justification: We provide errors and/or standard deviations for presented results.
    \item[] Guidelines:
    \begin{itemize}
        \item The answer NA means that the paper does not include experiments.
        \item The authors should answer "Yes" if the results are accompanied by error bars, confidence intervals, or statistical significance tests, at least for the experiments that support the main claims of the paper.
        \item The factors of variability that the error bars are capturing should be clearly stated (for example, train/test split, initialization, random drawing of some parameter, or overall run with given experimental conditions).
        \item The method for calculating the error bars should be explained (closed form formula, call to a library function, bootstrap, etc.)
        \item The assumptions made should be given (e.g., Normally distributed errors).
        \item It should be clear whether the error bar is the standard deviation or the standard error of the mean.
        \item It is OK to report 1-sigma error bars, but one should state it. The authors should preferably report a 2-sigma error bar than state that they have a 96\% CI, if the hypothesis of Normality of errors is not verified.
        \item For asymmetric distributions, the authors should be careful not to show in tables or figures symmetric error bars that would yield results that are out of range (e.g. negative error rates).
        \item If error bars are reported in tables or plots, The authors should explain in the text how they were calculated and reference the corresponding figures or tables in the text.
    \end{itemize}

\item {\bf Experiments Compute Resources}
    \item[] Question: For each experiment, does the paper provide sufficient information on the computer resources (type of compute workers, memory, time of execution) needed to reproduce the experiments?
    \item[] Answer: \answerYes{} 
    \item[] Justification: these details are presented in the appendix.
    \item[] Guidelines:
    \begin{itemize}
        \item The answer NA means that the paper does not include experiments.
        \item The paper should indicate the type of compute workers CPU or GPU, internal cluster, or cloud provider, including relevant memory and storage.
        \item The paper should provide the amount of compute required for each of the individual experimental runs as well as estimate the total compute. 
        \item The paper should disclose whether the full research project required more compute than the experiments reported in the paper (e.g., preliminary or failed experiments that didn't make it into the paper). 
    \end{itemize}
    
\item {\bf Code Of Ethics}
    \item[] Question: Does the research conducted in the paper conform, in every respect, with the NeurIPS Code of Ethics \url{https://neurips.cc/public/EthicsGuidelines}?
    \item[] Answer: \answerYes{} 
    \item[] Justification: The work is predominantly theoretical and does not raise ethical concerns.
    \item[] Guidelines:
    \begin{itemize}
        \item The answer NA means that the authors have not reviewed the NeurIPS Code of Ethics.
        \item If the authors answer No, they should explain the special circumstances that require a deviation from the Code of Ethics.
        \item The authors should make sure to preserve anonymity (e.g., if there is a special consideration due to laws or regulations in their jurisdiction).
    \end{itemize}

\item {\bf Broader Impacts}
    \item[] Question: Does the paper discuss both potential positive societal impacts and negative societal impacts of the work performed?
    \item[] Answer: \answerNo{} 
    \item[] Justification: The paper is predominantly theoretical. However, uncertainty quantification is a potential remedy for several issues with currently deployed machine learning models, so there is a potential for positive societal impact.
    \item[] Guidelines:
    \begin{itemize}
        \item The answer NA means that there is no societal impact of the work performed.
        \item If the authors answer NA or No, they should explain why their work has no societal impact or why the paper does not address societal impact.
        \item Examples of negative societal impacts include potential malicious or unintended uses (e.g., disinformation, generating fake profiles, surveillance), fairness considerations (e.g., deployment of technologies that could make decisions that unfairly impact specific groups), privacy considerations, and security considerations.
        \item The conference expects that many papers will be foundational research and not tied to particular applications, let alone deployments. However, if there is a direct path to any negative applications, the authors should point it out. For example, it is legitimate to point out that an improvement in the quality of generative models could be used to generate deepfakes for disinformation. On the other hand, it is not needed to point out that a generic algorithm for optimizing neural networks could enable people to train models that generate Deepfakes faster.
        \item The authors should consider possible harms that could arise when the technology is being used as intended and functioning correctly, harms that could arise when the technology is being used as intended but gives incorrect results, and harms following from (intentional or unintentional) misuse of the technology.
        \item If there are negative societal impacts, the authors could also discuss possible mitigation strategies (e.g., gated release of models, providing defenses in addition to attacks, mechanisms for monitoring misuse, mechanisms to monitor how a system learns from feedback over time, improving the efficiency and accessibility of ML).
    \end{itemize}
    
\item {\bf Safeguards}
    \item[] Question: Does the paper describe safeguards that have been put in place for responsible release of data or models that have a high risk for misuse (e.g., pretrained language models, image generators, or scraped datasets)?
    \item[] Answer: \answerNA{} 
    \item[] Justification: we do not deem this to be relevant.
    \item[] Guidelines:
    \begin{itemize}
        \item The answer NA means that the paper poses no such risks.
        \item Released models that have a high risk for misuse or dual-use should be released with necessary safeguards to allow for controlled use of the model, for example by requiring that users adhere to usage guidelines or restrictions to access the model or implementing safety filters. 
        \item Datasets that have been scraped from the Internet could pose safety risks. The authors should describe how they avoided releasing unsafe images.
        \item We recognize that providing effective safeguards is challenging, and many papers do not require this, but we encourage authors to take this into account and make a best faith effort.
    \end{itemize}

\item {\bf Licenses for existing assets}
    \item[] Question: Are the creators or original owners of assets (e.g., code, data, models), used in the paper, properly credited and are the license and terms of use explicitly mentioned and properly respected?
    \item[] Answer: \answerYes{} 
    \item[] Justification: we only consider well-established benchmark data, which we cite appropriately.
    \item[] Guidelines:
    \begin{itemize}
        \item The answer NA means that the paper does not use existing assets.
        \item The authors should cite the original paper that produced the code package or dataset.
        \item The authors should state which version of the asset is used and, if possible, include a URL.
        \item The name of the license (e.g., CC-BY 4.0) should be included for each asset.
        \item For scraped data from a particular source (e.g., website), the copyright and terms of service of that source should be provided.
        \item If assets are released, the license, copyright information, and terms of use in the package should be provided. For popular datasets, \url{paperswithcode.com/datasets} has curated licenses for some datasets. Their licensing guide can help determine the license of a dataset.
        \item For existing datasets that are re-packaged, both the original license and the license of the derived asset (if it has changed) should be provided.
        \item If this information is not available online, the authors are encouraged to reach out to the asset's creators.
    \end{itemize}

\item {\bf New Assets}
    \item[] Question: Are new assets introduced in the paper well documented and is the documentation provided alongside the assets?
    \item[] Answer: \answerYes{} 
    \item[] Justification: we will release open source code for reproducing experiments upon paper acceptance. This is well documented.
    \item[] Guidelines:
    \begin{itemize}
        \item The answer NA means that the paper does not release new assets.
        \item Researchers should communicate the details of the dataset/code/model as part of their submissions via structured templates. This includes details about training, license, limitations, etc. 
        \item The paper should discuss whether and how consent was obtained from people whose asset is used.
        \item At submission time, remember to anonymize your assets (if applicable). You can either create an anonymized URL or include an anonymized zip file.
    \end{itemize}

\item {\bf Crowdsourcing and Research with Human Subjects}
    \item[] Question: For crowdsourcing experiments and research with human subjects, does the paper include the full text of instructions given to participants and screenshots, if applicable, as well as details about compensation (if any)? 
    \item[] Answer: \answerNA{} 
    \item[] Justification: this paper does not involve crowdsourcing or other forms of research involving human subjects.
    \item[] Guidelines:
    \begin{itemize}
        \item The answer NA means that the paper does not involve crowdsourcing nor research with human subjects.
        \item Including this information in the supplemental material is fine, but if the main contribution of the paper involves human subjects, then as much detail as possible should be included in the main paper. 
        \item According to the NeurIPS Code of Ethics, workers involved in data collection, curation, or other labor should be paid at least the minimum wage in the country of the data collector. 
    \end{itemize}

\item {\bf Institutional Review Board (IRB) Approvals or Equivalent for Research with Human Subjects}
    \item[] Question: Does the paper describe potential risks incurred by study participants, whether such risks were disclosed to the subjects, and whether Institutional Review Board (IRB) approvals (or an equivalent approval/review based on the requirements of your country or institution) were obtained?
    \item[] Answer: \answerNA{} 
    \item[] Justification: this paper does not involve crowdsourcing or other forms of research involving human subjects.
    \item[] Guidelines:
    \begin{itemize}
        \item The answer NA means that the paper does not involve crowdsourcing nor research with human subjects.
        \item Depending on the country in which research is conducted, IRB approval (or equivalent) may be required for any human subjects research. If you obtained IRB approval, you should clearly state this in the paper. 
        \item We recognize that the procedures for this may vary significantly between institutions and locations, and we expect authors to adhere to the NeurIPS Code of Ethics and the guidelines for their institution. 
        \item For initial submissions, do not include any information that would break anonymity (if applicable), such as the institution conducting the review.
    \end{itemize}

\end{enumerate}

\end{document}